\setlist[enumerate]{nosep}
\theoremstyle{plain}
\newtheorem{theorem}{Theorem}[section]
\newtheorem{proposition}[theorem]{Proposition}
\newtheorem{lemma}[theorem]{Lemma}
\theoremstyle{definition}
\newtheorem{definition}{Definition}[section]
\newtheorem{assumption}{Assumption}
\theoremstyle{remark}
\newtheorem{remark}[theorem]{Remark}
\icmltitlerunning{PDTS for Adaptive Decision-Makers in Randomized Environments}
\begin{document}

\twocolumn[
\icmltitle{Fast and Robust: Task Sampling with Posterior and Diversity Synergies for Adaptive Decision-Makers in Randomized Environments}



\icmlsetsymbol{equal}{*}

\begin{icmlauthorlist}
\icmlauthor{Yun Qu}{equal,thuauto}
\icmlauthor{Qi (Cheems) Wang}{equal,thuauto}
\icmlauthor{Yixiu Mao}{equal,thuauto}
\icmlauthor{Yiqin Lv}{thuauto}
\icmlauthor{Xiangyang Ji}{thuauto}
\end{icmlauthorlist}

\icmlaffiliation{thuauto}{Department of Automation, Tsinghua University, Beijing, China}

\icmlcorrespondingauthor{Xiangyang Ji}{xyji@tsinghua.edu.cn}

\icmlkeywords{Machine Learning, ICML}

\vskip 0.3in
]



\printAffiliationsAndNotice{\icmlEqualContribution}

\begin{abstract}
Task robust adaptation is a long-standing pursuit in sequential decision-making.
Some risk-averse strategies, e.g., the conditional value-at-risk principle, are incorporated in domain randomization or meta reinforcement learning to prioritize difficult tasks in optimization, which demand costly intensive evaluations.
The efficiency issue prompts the development of robust active task sampling to train adaptive policies, where risk-predictive models are used to surrogate policy evaluation. 
This work characterizes the optimization pipeline of robust active task sampling as a Markov decision process, posits theoretical and practical insights, and constitutes robustness concepts in risk-averse scenarios.
Importantly, we propose an easy-to-implement method, referred to as Posterior and Diversity Synergized Task Sampling (PDTS), to accommodate fast and robust sequential decision-making.
Extensive experiments show that PDTS unlocks the potential of robust active task sampling, significantly improves the zero-shot and few-shot adaptation robustness in challenging tasks, and even accelerates the learning process under certain scenarios.
Our project website is at \url{https://thu-rllab.github.io/PDTS_project_page}.
\end{abstract}

\section{Introduction}\label{sec:intro}

\begin{figure*}[ht]
    \centering
    \includegraphics[width=0.97\linewidth]{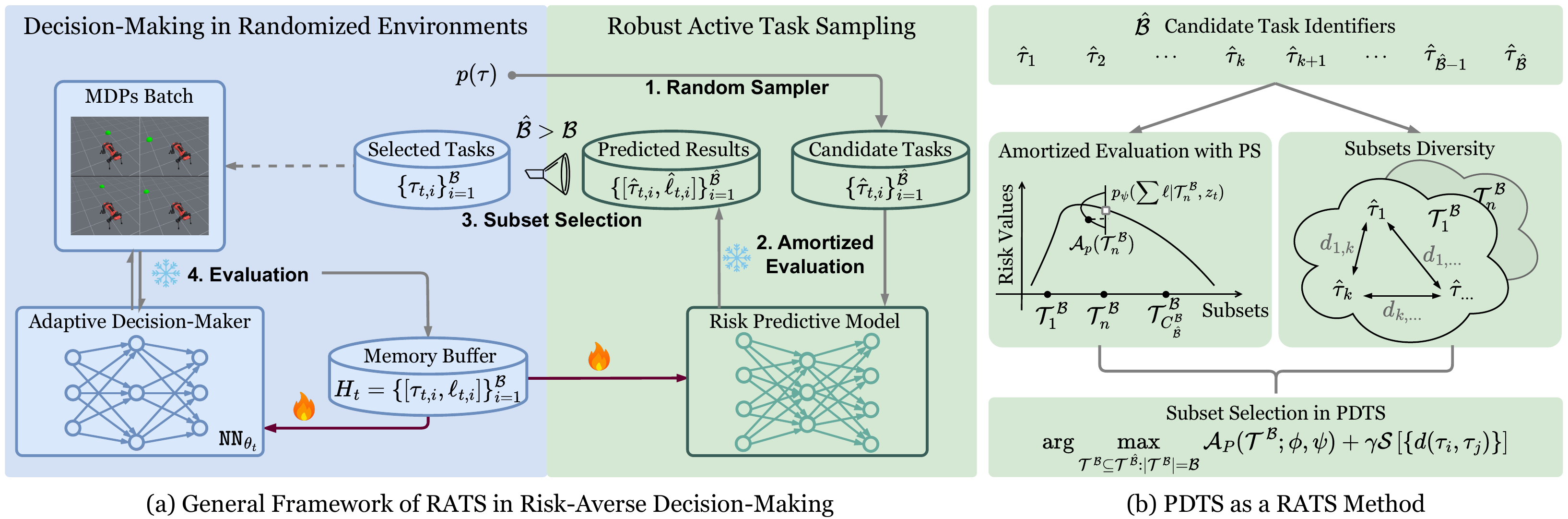}
    \caption{(a) General RATS in risk-averse decision-making. The pipeline involves amortized evaluation of task difficulties, robust subset selection, policy optimization in the MDP batch, and risk predictive models' update. [fire: updates; snow: evaluation] 
    (b) PDTS as a RATS method. PDTS treats task subsets as bandit arms, evaluates values through posterior sampling, and solves a regularized problem. }
    \label{fig:framework}
\end{figure*}

Deep reinforcement learning (RL) has garnered remarkable progress in solving complicated sequential decision-making problems in the past few years \citep{sutton2018reinforcement}.
However, an existing challenge is effectively transferring the RL policy to unseen but similar scenarios without learning from scratch.
A commonly used strategy is to randomize the environment, e.g., placing a distribution over Markov decision processes (MDPs), for policy search in a zero-shot or few-shot manner.
This facilitates the rise of domain randomization (DR) \cite{muratore2018domain} and Meta-RL \cite{finn2017model} paradigms, which train adaptive policies in task episodic learning.
Simultaneously, adaptation robustness to worst-case scenarios is catching increasing attention as most real-world decision-making scenarios are inherently risk-sensitive, where failures in adaptation can cause catastrophic outcomes, e.g., damage to robots \cite{carpin2016risk} or accidents in autonomous driving \cite{rempe2022generating}.

\textbf{Active Inference's Promise for Adaptive Robust Decision-Maker:}
When risk-averse principles are incorporated in DR and Meta-RL to enhance adaptation robustness \cite{ wang2024simple,lv2024theoretical,greenberg2024train}; prioritizing challenging tasks to optimize demands intensive and expensive policy evaluation in massive environments over iteration.
To overcome the efficiency bottleneck, \citet{wang2025modelpredictivetasksampling} constructs a risk predictive model to actively infer MDP difficulties for worst subset selection in policy search, which we identify as a method of the robust active task sampling (RATS) paradigm in Fig. \ref{fig:framework}. 
As policy evaluation in arbitrary MDPs can be amortized by executing this risk predictive model, the resulting model predictive task sampling (MPTS) \citep{wang2025modelpredictivetasksampling} enables expanding the scope of task subset selection, e.g., screening $\mathcal{B}$ from $\hat{\mathcal{B}}$ MDPs, to learn adaptive policies without extra environment interaction cost.
This reflects RTAS's huge potential for efficient, robust decision-making when exhaustive policy evaluation is prohibitive in the vast MDP space.

\textbf{Challenges in Theoretical Analysis and Implementations:}
Despite RATS's promise in decision-making, we can still perceive several issues from its latest SOTA method MPTS.
(i) No versatile tool is developed for theoretical analysis, e.g., the robustness concept in optimization, which is an indispensable consideration in risk-averse cases.
(ii) It requires a dedicated pseudo batch size $\hat{\mathcal{B}}$ and other configurations. 
As implied in Meta-RL results \citep{wang2025modelpredictivetasksampling}, appropriately mixing up the random and predictive samplers is decisive in task subset selection; otherwise, it degrades generalization and robustness (See Sec. \ref{subsec:diversity}).
(iii) There lack comprehensive discussions about acquisition principles in MPTS. 
The adopted upper confidence bound (UCB) principle must strike a balance between worst-case and uncertainty, with hyper-parameters carefully adjusted in implementation.

\textbf{Making Sense of RATS in Decision-Making:}
Regarding theoretical understanding, we first abstract the general RATS process as a task-selection MDP $\mathcal{M}$, construct an infinitely many-armed bandit (i-MAB) \citep{carpentier2015simple} for task subset selection and demonstrate MPTS \citep{wang2025modelpredictivetasksampling} as a special solution to the i-MAB.
Aim at exploring more optimal subsets with the risk predictive model; we make a diagnosis of the concentration issue under a larger $\hat{\mathcal{B}}$ and enhance the acquisition function with the diversity regularization \cite{wang2023max, borodin2017max}.
To simplify the amortized evaluation and exploit the stochastic optimism in i-MAB, we adopt the posterior sampling strategy \citep{russo2014learning} to search for the optimal task subset with fewer configurations.
Under these modifications, we present the Posterior and Diversity Synergized Task Sampling (PDTS) as a competitive RATS method.

\textbf{Contributions and Fascinating Discoveries in PDTS.}
This work is built on empirical findings and the risk predictive model in MPTS, while the research focus is orthogonal (See Table \ref{table_contribution}).
Surrounding risk-averse adaptive decision-making, the primary contributions are:
\begin{enumerate}
    \item Our constructed i-MAB provides a versatile model to achieve RATS under various principles, including but not limited to MPTS and PDTS.
    The separate robustness concepts can be refined accordingly. 
    \item The designed diversity regularized acquisition function fixes the concentration issue, allows for exploration in a wider range of task sets (e.g., $\hat{\mathcal{B}}=64\mathcal{B}$), and secures nearly worst-case MDP robustness.
    \item The resulting PDTS is easy-to-implement and benefits from stochastic optimism in posterior sampling for decision-making.
\end{enumerate}

Empirically, the most thrilling finding is that PDTS exhibits adaptation robustness superior to existing SOTA baselines in typical DR and Meta-RL benchmarks without complicated configurations.
Even in more realistic and challenging scenarios, such as vision-based decision-making, PDTS retains a remarkable performance over others.

\section{Research Background}

\textbf{Notations.}
For conciseness and coherence, we retain most notations in \citep{wang2025modelpredictivetasksampling} and leave the MDP distribution $p(\tau)$ details and RL preliminaries in Appendix \ref{sec_append_mdp_dist}.
Both DR and Meta-RL perform policy search in $p(\tau)$, where MDPs as tasks are specified by physics identifiers $\bm\tau\in\mathbb{R}^{d}$. 
The primary goal of DR \citep{tobin2017domain,muratore2018domain} and Meta-RL \citep{beck2023survey} is to seek a policy $\bm\theta\in\bm\Theta$ that adapts well to a new MDP with zero or a few rollouts.
We denote the task-specific dataset by $\mathcal{D}_{\tau}=\mathcal{D}_{\tau}^{S}\cup\mathcal{D}_{\tau}^{Q}$, where $\mathcal{D}_{\tau}^{S}$ are the $K$-rollouts for fast adaptation in MDP $\tau$ and $\mathcal{D}_{\tau}^{Q}$ are rollouts for after-adaptation policy evaluation.
DR differs from Meta-RL and learns to adapt in a zero-shot manner \citep{mehta2020active,chi2024unveiling}, indicating $\mathcal{D}_{\tau}^{S}=\emptyset$.
The risk function is $\ell:\mathcal{D}_{\tau}\times\bm\Theta\mapsto\mathbb{R}$, mapping to the adaptation risk value, e.g., negative average returns of query rollouts.

In task episodic learning of DR and Meta-RL, the optimization history is written as $\hat{H}_{t}=\left\{\bm\theta_{t},\left(\bm\tau_{t,i},\mathcal{D}_{\tau_{t,i}},\ell_{t,i}\right)\right\}_{i=1}^{\mathcal{B}}$, with $\mathcal{B}$ the number of tasks to optimize in $t$-th iteration.
MPTS \citep{wang2025modelpredictivetasksampling} further prepares it as $H_{t}=\left\{\left(\bm\tau_{t,i},\mathcal{D}_{\tau_{t,i}},\ell_{t,i}\right)\right\}_{i=1}^{\mathcal{B}}$ to feed into the risk learner to predict adaptation risk on arbitrary task.
Importantly, it introduces the pseudo task set $\mathcal{T}_{t}^{\hat{\mathcal{B}}}$ with $\hat{\mathcal{B}}>\mathcal{B}$ and employ the acquisition function $\mathcal{A}(\cdot)$ to actively select the subset for next iteration.
Throughout this work, we leave the exact optimization task batch size $\mathcal{B}$ fixed and same for all methods in both theoretical analysis and evaluation.

\subsection{Task Robust Optimization Methods}\label{sec:robustmethod}
Robustness \citep{carlini2019evaluating,chi2024does} is entangled with risk minimization principles.
And this part recaps those incorporated in DR and Meta-RL for robust decision-making.

\textbf{Expected/Empirical Risk Minimization (ERM).}
ERM originates from the statistical learning theory \citep{vapnik1998statistical}.
Such a principle minimizes the expectation of risk values under $p(\tau)$:
\begin{mdframed}[roundcorner=1pt, backgroundcolor=yellow!8,innerrightmargin=20pt]
\begin{equation}\label{erm}
\min_{\bm\theta\in\bm\Theta}\mathbb{E}_{p(\tau)}
        \Big[\ell(\mathcal{D}_{\tau}^{Q},\mathcal{D}_{\tau}^{S};\bm\theta)
        \Big],
\end{equation}
\end{mdframed}
where $p(\tau)$ is mostly a uniform distribution as default.

\textbf{Group Distributionally Robust Risk Minimization (GDRM).}
Such a principle aims to boost the adaptation robustness under certain proportional scenarios, e.g., a group of some under-sampled yet challenging tasks \citep{sagawa2019distributionally}.
In mathematics, we can write the expression as:
\begin{mdframed}[roundcorner=1pt, backgroundcolor=yellow!8,innerrightmargin=20pt,innertopmargin=-3pt]
\begin{equation}\label{eq_gdrm}
    \begin{split}
        \min_{\bm\theta\in\bm\Theta}\max_{g\in\mathcal{G}}\mathbb{E}_{p_{g}(\tau)}\Big[\ell(\mathcal{D}_{\tau}^{Q},\mathcal{D}_{\tau}^{S};\bm\theta)\Big],
    \end{split}
\end{equation}
\end{mdframed}
with $\mathcal{G}$ to denote a collection of groups over a task dataset.
GDRM handles extremely worst subpopulation shifts \citep{koh2021wilds} through a risk-reweighting mechanism.

\textbf{Distributionally Robust Risk Minimization (DRM).}
As a typical strategy in DRM, $\text{CVaR}_\alpha$ selects $(1-\alpha)$ proportional worst tasks after exact evaluation to optimize:
\begin{mdframed}[roundcorner=1pt, backgroundcolor=yellow!8,innerrightmargin=20pt,innertopmargin=-3pt]
\begin{equation}\label{eq_drm}
\min_{\theta\in\Theta}\text{CVaR}_{\alpha}(\bm\theta):=\mathbb{E}_{p_{\alpha}(\tau;\bm\theta)}
        \Big[\ell(\mathcal{D}_{\tau}^{Q},\mathcal{D}_{\tau}^{S};\theta)
        \Big].
\end{equation}
\end{mdframed}
In Meta-RL, this corresponds to the hard MDP prioritization \citep{greenberg2024train,lv2024theoretical}.
With $\min_{\theta\in\Theta}\lim_{\alpha\mapsto 1}\text{CVaR}_{\alpha}(\bm\theta)$, it degenerates to the worst-case optimization in \citep{collins2020task}.

\subsection{RATS Preliminaries \& MPTS Modules}\label{subsec:ratsandmpts}

Here, we specify RATS paradigm, which slightly differs from traditional active learning purposes \citep{cohn1996active}. 
RATS is mainly incorporated into risk-averse learning with traits: (i) active inference towards task difficulties with limited cost, and (ii) acquisition rules to select the subset from the pseudo task set to optimize for robustness.

The adaptation capability in either few-shot \citep{wang2022bridge,chi2021tohan} or zero-shot is our primary focus to evaluate in RATS.
Here, MPTS designs a risk predictive model $p(\ell\vert\bm\tau,H_{1:t};\bm\theta_{t})$ to surrogate expensive evaluation, e.g., negative average rollout returns $\ell$ of the policy $\bm\theta_{t}$ in a MDP $\tau$.
Hence, we identify it as a method of RATS.
In particular, the empirical evidence in \citep{wang2025modelpredictivetasksampling} Fig. 5 validates its risk predictive model's feasibility of approximately scoring MDPs' difficulties with high Pearson correlation coefficients between the model predictive ones and exact evaluation.
Next, we overview its construction together with the acquisition function.

\textbf{Generative Modeling Adaptation Optimization.}
MPTS treats the optimization history as sequence generation and involves latent variables $\bm z_t$ to summarize batches of adaptation risk over iterations, which leads to:
\vspace{-5pt}
\begin{equation}\label{eq:lvm}
\small
        p(\mathcal{\bm L}_{0:T}^{\mathcal{B}},\bm z_{0:T}\vert\bm\theta_{0:T})
        =p(\bm z_{0})\prod_{t=0}^{T}p(\mathcal{\bm L}^{\mathcal{B}}_{t}\vert\bm z_{t},\bm\theta_{t})\prod_{t=0}^{T-1}p(\bm z_{t+1}\vert\bm z_{t}),
\end{equation}
with the evaluation risk batch $\mathcal{\bm L}^{\mathcal{B}}_{t}=\{(\bm\tau_{t,i},\ell_{t,i})\}_{i=1}^{\mathcal{B}}$.

With the Bayes rule and the streaming variational inference \citep{broderick2013streaming,nguyen2017variational} \textit{w.r.t.} Eq. (\ref{eq:lvm}), it obtains the approximate evidence lower bound of the risk learner to maximize in each batch:\vspace{-5pt}
\begin{equation}\label{eq_stream_vi_obj}
    \small
    \begin{split}
        \max_{\bm\psi\in\bm\Psi,\bm\phi\in\bm\Phi} \mathcal{G}_{\text{ELBO}}(\bm\psi,\bm\phi) 
        &:= \mathbb{E}_{q_{\bm\phi}(\bm z_{t}\vert H_{t})}\left[\sum_{i=1}^{\mathcal{B}}\ln p_{\bm\psi}(\ell_{t,i}\vert\bm\tau_{t,i},\bm z_{t})\right] \\
        &\hspace{-25pt} -\beta D_{KL}\Big[q_{\bm\phi}(\bm z_{t}\vert H_{t})\parallel q_{\bar{\bm\phi}}(\bm z_{t}\vert H_{t-1})\Big],
    \end{split}
    \notag
    \makebox[0pt][r]{\raisebox{-4ex}[0ex][0ex]{(5)}}
    \addtocounter{equation}{1}
\end{equation}

\vspace{-20pt}
with $\bar{\bm\phi}$ the fixed conditioned prior from the last update  and $\beta\in\mathbb{R}^{+}$ the penalty weight.
Then the amortized evaluation of adaptation performance is approximated as $p(\ell\vert\bm\tau,H_{1:t};\bm\theta_{t})\approx\mathbb{E}_{q_{\bm\phi}(\bm z_{t}\vert H_{t})}\left[p_{\bm\psi}(\ell\vert\bm\tau,\bm z_{t};\bm\theta_{t})\right]$ through Monte Carlo estimates.

\begin{mdframed}[roundcorner=1pt, backgroundcolor=yellow!8,innerrightmargin=20pt,innertopmargin=-3pt]
\begin{subequations}\label{eq_mpts_workflows}
\begin{align}
    \max_{\bm\psi\in\bm\Psi}\mathcal{L}_{\text{ML}}(\bm\psi):=\ln p_{\bm\psi}(H_{t}\vert H_{1:t-1})\\
        p(\ell\vert\hat{\bm\tau}_{i},H_{1:t};\bm\theta_{t})\xlongrightarrow{\text{MC}}\{m(\ell_i),\sigma(\ell_i)\}_{i=1}^{\hat{\mathcal{B}}}\\
        \mathcal{T}_{t+1}^{\mathcal{B}*}=\arg\max_{\mathcal{T}^{\mathcal{B}}_{t+1}\subseteq\mathcal{T}^{\hat{\mathcal{B}}}_{t+1}:|\mathcal{T}^{\mathcal{B}}_{t+1}|=\mathcal{B}}\mathcal{A}_{\text{U}}(\mathcal{T}^{\mathcal{B}}_{t+1})  
\end{align}
\end{subequations}
\end{mdframed}
\textbf{Optimization Pipeline.}
MPTS contains three critical steps in accordance with Eq.~(\ref{eq_mpts_workflows}).
\textcircled{\small{1}} In \textit{approximate posterior inference} for Eq.~(\ref{eq_mpts_workflows})a, MPTS optimizes the risk predictive model through maximizing Eq.~(\ref{eq_stream_vi_obj}) with $H_{t}$;  
\textcircled{\small{2}} In \textit{amortized evaluation}, it samples $\mathcal{T}_{t+1}^{\hat{\mathcal{B}}}$ from $p(\tau)$ and runs stochastic forward passes to estimate $m(\ell_i)$ and $\sigma(\ell_i)$  $\forall\hat{\tau}\in\mathcal{T}_{t}^{\hat{\mathcal{B}}}$ in Eq.~(\ref{eq_mpts_workflows})b;
\textcircled{\small{3}}
In \textit{subset selection}, it picks up the subset $\mathcal{T}_{t+1}^{\mathcal{B}}$ among Top-$\mathcal{B}$ acquisition scores from $\mathcal{T}_{t+1}^{\hat{\mathcal{B}}}$ for next iteration optimization.
By repeating \textcircled{\small{1}}-\textcircled{\small{3}} steps for $T$-rounds, MPTS derives the robust adaptive decision-maker.

\textbf{Acquisition Function in MPTS.}
The subset selection rule is built on the principle of optimism in the face of uncertainty (OFU) \citep{auer2002using}, and tasks with worse adaptation performance and higher epistemic uncertainty are prioritized.
This leads to the UCB principle, i.e., $\mathcal{A}_{\text{U}}(\mathcal{T}^{\mathcal{B}}_{t+1})$ \citep{auer2002finite,garivier2008upper}:
\vspace{-5pt}
\begin{equation}\label{eq_acq}
\small
    \begin{split}
        \mathcal{A}_{\text{U}}(\mathcal{T}^{\mathcal{B}}_{t+1})=\sum_{i=1}^{\mathcal{B}}\gamma_0 m(\ell_i)+\gamma_1\sigma(\ell_i),
        \
        \text{with}
        \
        \tau_{i}\in\mathcal{T}^{\mathcal{B}}_{t+1},
    \end{split}
\end{equation}

\vspace{-10pt}
where the mean $m(\ell_i)=\mathbb{E}_{q_{\bm\phi}(\bm z_{t}\vert H_{t})}\Big[p_{\bm\psi}(\ell\vert\bm\tau_i,\bm z_{t})\Big]$ and the standard deviation $\sigma(\ell_i)=\mathbb{V}_{q_{\bm\phi}(\bm z_{t}\vert H_{t})}^{\frac{1}{2}}\Big[p_{\bm\psi}(\ell\vert\bm\tau_i,\bm z_{t})\Big]$ of task-specific adaptation risk and are estimated through multiple stochastic forward passes, i.e., $\bm z_t\sim q_{\bm\phi}(\bm z_t\vert H_{t})$ and $\ell_i\sim p_{\bm\psi}(\ell_i\vert\bm\tau_{i},\bm z_t)$.
$\{\gamma_0,\gamma_1\}$ are trade-off parameters.

\begingroup
\setlength{\tabcolsep}{6.0pt}
\begin{table*}[ht]
   \begin{center}
    \caption{Comparison between MPTS and PDTS in Contributions.
    }
    \label{table_contribution}
    \begin{tabular}{|c|cccc|}
      \toprule
       & Research Lens & Robustness Concept & Subset Selection & Acquisition Rule \\
      \midrule
      MPTS \citep{wang2025modelpredictivetasksampling} & Generative Modeling & Nearly $\text{CVaR}_{\alpha}$ & Max-Sum (Top-$\mathcal{B}$)  & UCB \\
      \midrule
      PDTS (Ours) & MDP \& i-MABs & Nearly Worst-Case & Max-Sum Diversity  & Posterior Sampling \\
      \bottomrule  
    \end{tabular}
  \end{center}
\end{table*}
\endgroup

\section{Theoretical Investigations and Practical Enhancements}\label{sec_theory}

This section first studies RATS with a task-selection MDP, proposes i-MABs as a versatile tool for RATS methods and establishes its connections with MPTS.
To promote RATS's use in risk-averse decision-making, we analyze acquisition rules' influence on robustness concepts and present PDTS as an easy-to-implement yet powerful scheme.

\subsection{Enable Robust Active Task Sampling with i-MABs}\label{subsec:imabs}
When RATS meets decision-making, it involves scoring MDPs' difficulty from amortized evaluation, selecting a subset, and performing adaptive policy search in either a zero-shot or few-shot manner.
The following introduces a theoretical tool to analyze these steps in RATS.

\begin{figure}[htbp]
    \centering
    \includegraphics[width=0.95\linewidth]{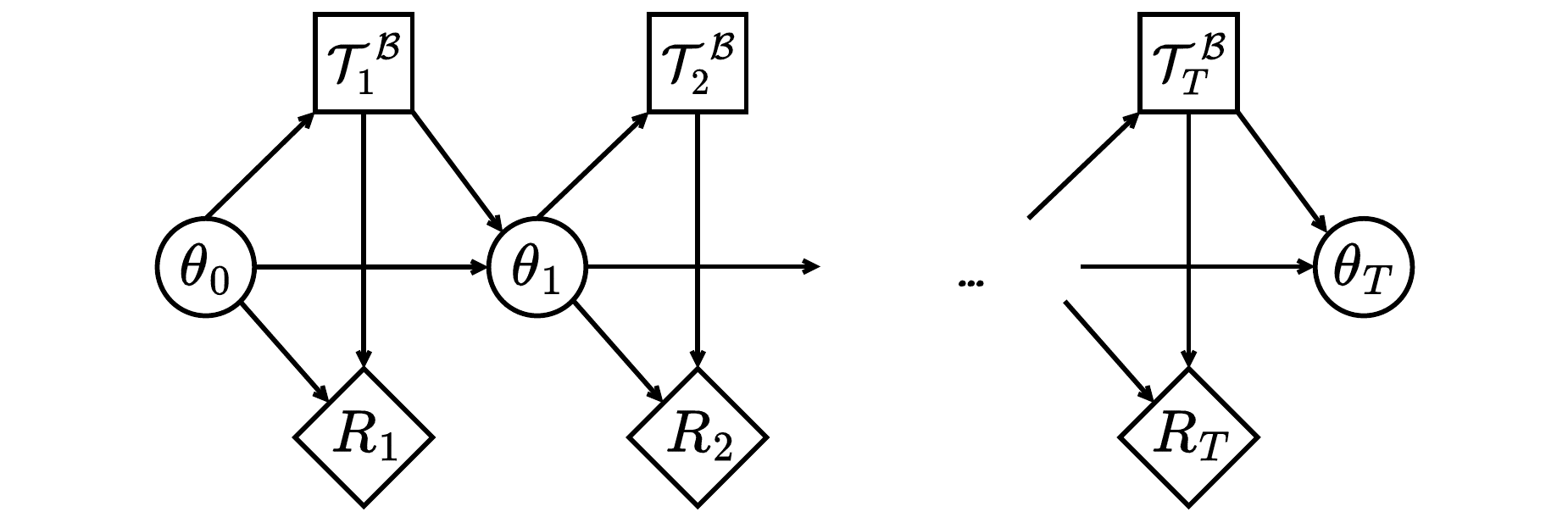}
    \vspace{-5pt}
    \caption{
\textbf{Task Robust Episodic Learning as a Task-Selection MDP}. 
    }
    \label{fig:secret_mdp}
\end{figure}

\textbf{Task Robust Episodic Learning as a MDP.}
The primary insight lies in a finite-horizon Markov decision process \citep{puterman2014markov}, denoted by $\mathcal{M}=<\mathbf{S},\mathbf{A},\mathbf{P},\mathbf{R}>$.
Here, we specify the essential components of $\mathcal{M}$ as:
\begin{itemize}[itemsep=2pt,topsep=0pt,parsep=0pt,leftmargin=10pt]
    \item\textit{State Space.} $\mathcal{M}$ treats the feasible machine learner's parameter, such as Meta-RL policies, as the reachable state, i.e., $\mathbf{S}=\{\bm\theta\in\bm\Theta\}$;
    \item\textit{Action Space.} The action space constitutes a collection of task subsets with cardinality constraints $\mathbf{A}_{t}=\{\mathcal{T}_{t}^{\mathcal{B}}\subseteq\mathcal{T}^{\hat{\mathcal{B}}}_{t}\ \text{with} \ |\mathcal{T}^{\mathcal{B}}_{t}|=\mathcal{B}\}$ in RATS or $\text{CVaR}_{\alpha}$ methods;
    \item \textit{Transition Dynamics.}We describe the dynamical system as $p(\bm\theta_{t+1}\vert\bm\theta_{t},\mathcal{T}_{t+1}^{\mathcal{B}})\in\mathbf{P}$ conditioned on $\bm\theta_{t}$ and the action $\mathcal{T}_{t+1}^{\mathcal{B}}$ with the transited state (after-adaptation) $\bm\theta_{t+1}$;
    \item \textit{Reward Function.} The step-wise reward quantifies adaptation robustness improvement after state transitions.
    With $\text{CVaR}_{\alpha}$ as a risk-averse measure, we define it as $R(\bm\theta_{t},\mathcal{T}_{t+1}^{\mathcal{B}}):=\text{CVaR}_{\alpha}(\bm\theta_{t})-\text{CVaR}_{\alpha}(\bm\theta_{t+1})$.
\end{itemize}

Given $\mathcal{M}$ in Fig. \ref{fig:secret_mdp}, the maximum iteration step $T$ also corresponds to the total interaction rounds. 
The agent actively selects $\mathcal{T}_{t+1}^{\mathcal{B}}$ from the time-varying $\mathcal{T}_{t+1}^{\hat{\mathcal{B}}}$ and optimize the machine learner to increase adaptation robustness in $T$-rounds.

Accordingly, the sequential actions are the outcome of a series of deterministic functions as the policy set $\Pi_{0:T-1}=\{\pi_t\}_{t=0}^{T-1}$.
Here, the policy maps $H_{0:t}$ to the next subset from $\mathcal{T}_{t+1}^{\hat{\mathcal{B}}}$ for optimization, i.e., $\pi_t:H_{0:t}\mapsto\mathcal{T}^{\mathcal{B}}_{t+1}$.
These ingredients can depict $\mathcal{M}$ in a probabilistic graph as Fig. \ref{fig:secret_mdp}.

Formally, we can express the agent's ultimate goal as maximizing the cumulative reward in a sequential manner,
\vspace{-5pt}
\begin{equation}\label{eq_cumu_reward}
    \Pi^{*}_{0:T-1}=\arg\max_{\Pi_{0:T-1}}\sum_{t=0}^{T-1}R(\bm\theta_{t},\mathcal{T}_{t+1}^{\mathcal{B}}).
\end{equation}
\vspace{-15pt}

Note that solving Eq.~(\ref{eq_cumu_reward}) is equivalent to reaching the optimal state $\bm\theta_{T}^{*}$ with lowest $\text{CVaR}_{\alpha}(\bm\theta)$ after repeating $T$-round optimization steps.

We also denote the policy subset of an arbitrary intermediate decision-making sequence by $\Pi_{k:j}:=\{\pi_{i}\}_{i=k}^{j}$ with its optimal solution marked in $^*$.
The associated cumulated return is $\sum_{i=k}^{j}R(\bm\theta_{i},\mathcal{T}_{i+1}^{\mathcal{B}*})$ conditioned on the starting state $\bm\theta_{k}$.

\begin{remark}[Bellman Optimality]
For the studied $T$-horizon $\mathcal{M}$, we write its Bellman optimality as:
\vspace{-5pt}
\begin{equation}\label{eq_bellman}
\small
    \sum_{i=0}^{T-1}R(\bm\theta_{i},\mathcal{T}_{i+1}^{\mathcal{B}*})
    =\max_{\Pi_{0:t-1}}\sum_{i=0}^{t-1}R(\bm\theta_{i},\mathcal{T}_{i+1}^{\mathcal{B}})+\sum_{i=t}^{T-1}R(\bm\theta_{i},\mathcal{T}_{i+1}^{\mathcal{B}*}),
\end{equation}
revealing that the optimal solution to the sub-problem also reserves its global optimality.  
Hence, we can break the problem-solving into optimal subset selection in each round.    
\end{remark}

\textbf{From i-MABs to MPTS's Robustness Concept.}
Finding plausible strategies to solve $\mathcal{M}$ is non-trivial as policy search is considered in a discrete space with the varying action set $\mathbf{A}_{t}$.
To this end, we further simplify $\mathcal{M}$ into an infinite MAB \citep{mahajan2008multi} to enable an online search of the optimal subset and interpret the optimization pipeline of MPTS as a special case.

Distinguished from the vanilla multi-armed bandit, $\mathcal{M}$ involves the state $\bm\theta_t$ and $\mathbf{A}_{t+1}$ induced by resampled $\mathcal{T}^{\hat{\mathcal{B}}}_{t+1}$.  
The arm corresponds to a feasible subset $\mathcal{T}^{\mathcal{B}}_{t+1}\in\mathbf{A}_{t+1}$.
Hence, we can associate the reward distribution $p(R\vert\bm\theta_{t},\mathcal{T}_{t+1}^{\mathcal{B}})$ with the chosen action $\mathcal{T}_{t+1}^{\mathcal{B}}$. 
As a result, one optimistic strategy for i-MABs is to execute greedy search $\mathcal{T}_{t+1}^{\mathcal{B}*}=\arg\max_{\mathcal{T}_{t+1}^{\mathcal{B}}\in\mathbf{A}_{t+1}}\mathbb{E}\left[R\vert\bm\theta_{t},\mathcal{T}_{t+1}^{\mathcal{B}}\right]$ in each round.

The regret in the i-MAB measures the performance difference between the cumulated rewards of the optimal policy $\Pi_{0:T-1}^{*}$ and the actually executed policy $\Pi_{0:T-1}$ over $T$-rounds:
\vspace{-5pt}
    \begin{equation}
        \text{Regret}(T,\Pi_{0:T-1})=\sum_{t=0}^{T-1}\left[R(\bm\theta_{t}^*,\mathcal{T}_{t+1}^{\mathcal{B}*})-R(\bm\theta_{t},\mathcal{T}_{t+1}^{\mathcal{B}})\right].
    \end{equation}
We can further simplify the definition expression as $\text{Regret}(T,\Pi_{0:T-1})=\text{CVaR}_{\alpha}(\bm\theta_{T})-\text{CVaR}_{\alpha}(\bm\theta_{T}^{*})$, which quantifies the performance gap between the optimal state and the practical state under $(1-\alpha)$-tail robustness.

\begin{proposition}[MPTS as a UCB-guided Solution to i-MABs]\label{prop_cvar_bandit}
    Executing MPTS pipeline in Eq. (\ref{eq_mpts_workflows}) is equivalent to approximately solving $\mathcal{M}$ with the i-MAB under the UCB principle.
    \vspace{-5pt}
\end{proposition}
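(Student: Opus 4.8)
The plan is to establish the equivalence constructively, matching each of the three stages of the MPTS pipeline in Eq.~(\ref{eq_mpts_workflows}) to a component of a UCB-based i-MAB solver, and invoking the Bellman optimality of the Remark to reduce the $T$-horizon control of $\mathcal{M}$ to a single-round greedy arm choice. First I would note that, since $\text{CVaR}_{\alpha}(\bm\theta_{t})$ is independent of the action $\mathcal{T}_{t+1}^{\mathcal{B}}$, the reward decomposition $R(\bm\theta_{t},\mathcal{T}_{t+1}^{\mathcal{B}})=\text{CVaR}_{\alpha}(\bm\theta_{t})-\text{CVaR}_{\alpha}(\bm\theta_{t+1})$ implies
\[
\arg\max_{\mathcal{T}_{t+1}^{\mathcal{B}}\in\mathbf{A}_{t+1}}\mathbb{E}\!\left[R\mid\bm\theta_{t},\mathcal{T}_{t+1}^{\mathcal{B}}\right]
=\arg\min_{\mathcal{T}_{t+1}^{\mathcal{B}}\in\mathbf{A}_{t+1}}\mathbb{E}\!\left[\text{CVaR}_{\alpha}(\bm\theta_{t+1})\mid\bm\theta_{t},\mathcal{T}_{t+1}^{\mathcal{B}}\right].
\]
By Eq.~(\ref{eq_bellman}), the global optimum of Eq.~(\ref{eq_cumu_reward}) is attained by taking this greedy-optimal arm in every round, so it suffices to show that MPTS realizes an optimistic surrogate of this per-round selection.

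Next, I would map the pipeline stages. Stage (a), maximizing the marginal likelihood $\mathcal{L}_{\text{ML}}(\bm\psi)$, is the approximate posterior inference that fits the reward model, i.e., it furnishes the posterior predictive over the per-arm risk distribution $p(R\mid\bm\theta_{t},\mathcal{T}_{t+1}^{\mathcal{B}})$. Stage (b), the Monte Carlo forward passes returning $\{m(\ell_i),\sigma(\ell_i)\}_{i=1}^{\hat{\mathcal{B}}}$, produces the posterior predictive mean and epistemic standard deviation of each candidate task $\hat{\bm\tau}_i$, i.e., the estimated value and uncertainty of the elementary contribution that each task makes to an arm. Because the CVaR-reduction reward is driven by optimizing on the hardest tasks and because candidate arms differ only through their member tasks, the arm value decomposes additively over its members, which is exactly the $\sum_{i=1}^{\mathcal{B}}$ structure appearing in Eq.~(\ref{eq_acq}).

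Finally, I would identify stage (c) with the optimistic greedy choice. Expanding $\mathcal{A}_{\text{U}}(\mathcal{T}_{t+1}^{\mathcal{B}})=\sum_{i=1}^{\mathcal{B}}\gamma_0 m(\ell_i)+\gamma_1\sigma(\ell_i)$, the term $\gamma_0 m(\ell_i)$ is the exploitation estimate of a task's contribution to the reward (higher predicted risk yields larger CVaR reduction, matching the $(1-\alpha)$-tail prioritization of Eq.~(\ref{eq_drm})), while $\gamma_1\sigma(\ell_i)$ is the OFU exploration bonus; their sum is the standard UCB index. Since the acquisition is additive and separable across tasks under the fixed cardinality $|\mathcal{T}_{t+1}^{\mathcal{B}}|=\mathcal{B}$, the subset argmax is solved exactly by the Top-$\mathcal{B}$ (Max-Sum) rule, which is precisely the tractable arm selection MPTS executes. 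Chaining the three stages therefore realizes the optimistic greedy arm choice $\arg\max_{\mathcal{T}}\mathbb{E}[R\mid\bm\theta_t,\mathcal{T}]$ up to model-fitting and Monte Carlo error, which by the Bellman reduction solves $\mathcal{M}$ approximately, establishing the claim.

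The main obstacle is justifying the additive-surrogate reduction underlying the second and third steps: one must argue that the true arm value $\mathbb{E}[R\mid\bm\theta_t,\mathcal{T}_{t+1}^{\mathcal{B}}]$, which depends on the \emph{joint} effect of a task batch on the post-adaptation $\text{CVaR}_{\alpha}$, is faithfully approximated by the separable sum $\sum_i[\gamma_0 m(\ell_i)+\gamma_1\sigma(\ell_i)]$ of per-task optimistic risk estimates, and that $\sigma(\ell_i)$ genuinely captures the epistemic uncertainty needed for a valid UCB index. This is where the qualifier \emph{approximately} in the statement does its work: exact arm values would require the expensive policy evaluation that the risk predictive model amortizes, so I would phrase the equivalence as holding in the surrogate sense, with the additivity and tail-prioritization arguments supplying the link between per-task risk and CVaR reduction.
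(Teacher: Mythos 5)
Your proposal follows the same high-level architecture as the paper's proof---Bellman reduction to a per-round greedy arm choice, then identification of the three MPTS stages with reward-model fitting, per-arm value estimation, and optimistic selection---but it leaves unproven exactly the step the paper's proof exists to supply. You assert that ``the arm value decomposes additively over its members'' and that higher predicted risk yields larger CVaR reduction, and you flag this additive-surrogate reduction yourself as ``the main obstacle,'' resolving it only by appealing to the word \emph{approximately} in the statement; that is not an argument. The paper closes this gap with a dedicated lemma (Subset with Top-$\mathcal{B}$ Risk Values as the Optimal Arm): it invokes the dual form of $\text{CVaR}_{\alpha}$, under which the Top-$\mathcal{B}$ risks in the pseudo batch constitute an unbiased Monte Carlo estimate of the $\alpha$-tail expectation, and then a first-order Taylor/gradient-alignment argument: a gradient step computed from the unbiased tail objective decreases $\text{CVaR}_{\alpha}$ by $\approx\eta\|\nabla_{\bm\theta}\mathcal{L}(\bm\theta)\|_2^2$, whereas a step computed from any other subset (a biased sample from some $q(\tau)$) decreases it by $\approx\eta\|\nabla_{\bm\theta}\mathcal{L}(\bm\theta)\|_2^2\cos\alpha_q\le\eta\|\nabla_{\bm\theta}\mathcal{L}(\bm\theta)\|_2^2$. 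This is what turns ``worst subset $=$ greedy-optimal arm'' from an assertion into a derivation.

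Second, your claim that $\sum_{i=1}^{\mathcal{B}}[\gamma_0 m(\ell_i)+\gamma_1\sigma(\ell_i)]$ ``is the standard UCB index'' conflates the arm-level UCB with a sum of task-level UCBs; the paper keeps these distinct, and the distinction is where the word ``approximately'' actually earns its keep. The paper posits a Gaussian model $p(\ell_{t+1,i}\mid\bm\tau_i,H_{1:t};\bm\theta_t)=\mathcal{N}(\mu_{t+1,i},\sigma_{t+1,i}^2)$ with conditionally independent task draws, so the arm's value $\sum_i\ell_{t+1,i}$ is Gaussian with standard deviation $\sqrt{\sum_i\sigma_{t+1,i}^2}$, making the proper arm-level UCB equal to $\gamma_0\sum_i m(\ell_i)+\gamma_1\sqrt{\sum_i\sigma(\ell_i)^2}$. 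The elementary inequality $\sqrt{\sum_i\sigma_i^2}\le\sum_i\sigma_i$ then shows that MPTS's additive acquisition function upper-bounds this true UCB, i.e., MPTS executes an \emph{approximate} UCB rule for the i-MAB. Without these two pieces---the gradient-alignment lemma and the Gaussian-sum inequality---your stage-by-stage mapping is a plausible narrative rather than a proof; with them, it essentially becomes the paper's proof.
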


Consequently, our i-MAB is a theoretical model for inducing RATS methods, and MPTS can be viewed as a special case in \textbf{Proposition} \ref{prop_cvar_bandit}.

\subsection{Acquisition Functions Matter in Improving Coverage \& Boosting Robustness}\label{subsec:diversity}

Several decision-making scenarios, e.g., robotics, are risk-averse, which makes worst-case optimization more advantageous \citep{greenberg2024train}.
However, (i) the search scope to worst cases is restricted by the batch size, and (ii) minimax optimization requires carefully designed relaxation in the field \citep{collins2020task,sagawa2019distributionally}.

\textbf{Benefits of Enlarging $\hat{\mathcal{B}}$ in Subset Selection.} 
Note that the feasible subset is the arm in the i-MAB, enlarging $\hat{\mathcal{B}}$ increases its number to $|\mathbf{A}_{t}|=C_{\hat{\mathcal{B}}}^{\mathcal{B}}$. 
One promising trait of the risk predictive model in MPTS is to amortize the policy evaluation in arbitrary MDP without exact interactions.
Hence, greater $\hat{\mathcal{B}}$ in implementation reserves at least two bonus:
(i) it encourages exploration in the task space with more candidate subsets at no actual interaction cost;
(ii) under high-risk prioritization rule, the optimization pipeline in RATS approximately executes $\text{CVaR}_{1-\frac{\mathcal{B}}{\hat{\mathcal{B}}}}$ in each iteration, i.e., worst-case optimization with $\hat{\mathcal{B}}\to\infty$.
Notably, the additional computational overhead with larger $\hat{\mathcal{B}}$ remains negligible due to the efficiency of the risk predictive model—its cost is significantly lower than that of agent-environment interactions and policy optimization in Meta-RL or DR.

Unfortunately, MPTS might encounter performance collapse with greater $\hat{\mathcal{B}}$, as indicated in Fig.~\ref{fig:mptsfail} when $\hat{\mathcal{B}}=8\mathcal{B}$ without a remedy.
This empirical result can be attributed to the selected subset's concentration in a narrow range, which motivates the proposal of heuristic tricks in Meta-RL to adopt a mixed sampling strategy to ensure sufficient visitations to the whole task space \citep{wang2025modelpredictivetasksampling,greenberg2024train}.
Although these heuristics mitigate the issue, they do not provide a complete solution, as performance degradation persists with increasing $\mathcal{B}$.

\begin{figure}
    \centering
    \includegraphics[width=\linewidth]{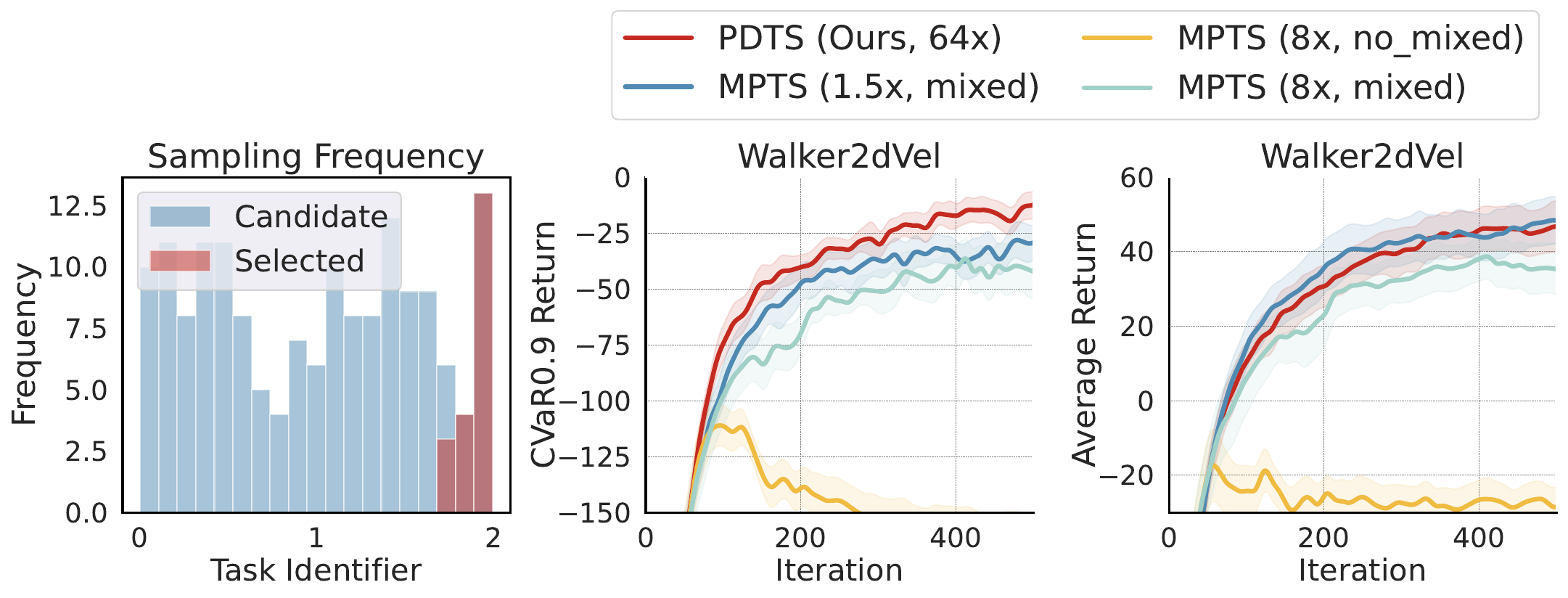}
    \vspace{-15pt}
    \caption{\textbf{MPTS's Performance Collapse with Greater $\hat{\mathcal{B}}$.}
    We report the performance collapses of MPTS on Walker2dVel in the case $\hat{\mathcal{B}}=8\mathcal{B}$. The task sampling frequency reveals the presence of the concentration issue.}
    \label{fig:mptsfail}
\end{figure}

\textbf{Theoretical Diagnosis of Sealed Exploration Potential in MPTS.}
In practice, we propose to circumvent complicated heuristics and retain the simplicity to enable sufficient exploration in i-MABs.
To this end, we analyze the concentration issue and report the theoretical analysis as \textbf{Proposition} \ref{prop:concentration}.
\begin{proposition}[Concentration Issue in Average Top-$\mathcal{B}$ Selection]\label{prop:concentration}
Let $f(\bm{\tau}): \mathbb{R}^d\to\mathbb{R}$ be a unimodal and continuous function, where $d\in\mathbb{N}^+$ and $\bm{\tau}\in\mathbb{R}^d$, with a maximum value $f(\bm{\tau}^*)$ at $\bm{\tau}^*$.
We uniformly sample a set of points $ \mathcal{T}^{\hat{\mathcal{B}}} = \{\bm{\tau}_i\}_{i=1}^{\hat{\mathcal{B}}}$, 
where $\bm{\tau_i}$ are i.i.d. with a probability $p_\epsilon$ of falling within a $\epsilon$-neighborhood of $\bm{\tau^*}$ as $|f(\bm{\tau}) - f(\bm{\tau}^*)|\leq \epsilon$. 
Following MPTS, we select the Top-$\mathcal{B}$ samples with the largest function values, i.e., 
\vspace{-4pt}
$$
\mathcal{T}^{\mathcal{B}} = \text{Top-}\mathcal{B}(\mathcal{T}^{\hat{\mathcal{B}}}, f), \quad \hat{\mathcal{B}}, \mathcal{B} \in \mathbb{N}^+, \; \mathcal{B} \leq \hat{\mathcal{B}},$$

\vspace{-10pt}
For any $\epsilon>0$ such that $p_\epsilon<\frac{\hat{\mathcal{B}}-\mathcal{B}+2}{\hat{\mathcal{B}}+1}$, the \textit{concentration probability}
\vspace{-6pt}
$$\mathbb{P}\left( |f(\bm{\tau}) - f(\bm{\tau}^*)| \leq \epsilon \ \vert\  \forall \bm{\tau} \in \mathcal{T}^{\mathcal{B}} \right)$$

\vspace{-13pt}
increases with $\hat{\mathcal{B}}$ and converges to 1 with $\hat{\mathcal{B}} \to \infty$.
\end{proposition}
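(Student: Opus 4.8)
\emph{The plan} is to collapse the spatial concentration event into a single binomial tail probability in $\hat{\mathcal{B}}$, and then show that this scalar quantity is monotone in $\hat{\mathcal{B}}$ with limit one. The conceptual core is a reduction via value-thresholding. Since $f(\bm\tau)\le f(\bm\tau^*)$ everywhere, the neighborhood $\{|f(\bm\tau)-f(\bm\tau^*)|\le\epsilon\}$ coincides with the super-level set $\{f(\bm\tau)\ge f(\bm\tau^*)-\epsilon\}$, which unimodality guarantees is a connected region around $\bm\tau^*$. The key structural observation is that every sample inside this set carries a larger function value than every sample outside it. Hence the Top-$\mathcal{B}$ selection $\mathcal{T}^{\mathcal{B}}$ lies entirely inside the $\epsilon$-neighborhood if and only if at least $\mathcal{B}$ of the $\hat{\mathcal{B}}$ i.i.d. draws land inside it. Letting $N$ count those draws, $N\sim\mathrm{Bin}(\hat{\mathcal{B}},p_\epsilon)$, so the concentration probability equals $P(\hat{\mathcal{B}}):=\mathbb{P}(N\ge\mathcal{B})=\sum_{k=\mathcal{B}}^{\hat{\mathcal{B}}}\binom{\hat{\mathcal{B}}}{k}p_\epsilon^{k}(1-p_\epsilon)^{\hat{\mathcal{B}}-k}$, with no order-statistic bookkeeping left to do.

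For monotonicity I would argue by coupling: a $\mathrm{Bin}(\hat{\mathcal{B}}+1,p_\epsilon)$ count equals a $\mathrm{Bin}(\hat{\mathcal{B}},p_\epsilon)$ count plus an independent $\mathrm{Bernoulli}(p_\epsilon)$, so adding a sample can only raise $N$; conditioning on the extra Bernoulli yields the closed-form increment $P(\hat{\mathcal{B}}+1)-P(\hat{\mathcal{B}})=p_\epsilon\binom{\hat{\mathcal{B}}}{\mathcal{B}-1}p_\epsilon^{\mathcal{B}-1}(1-p_\epsilon)^{\hat{\mathcal{B}}-\mathcal{B}+1}\ge 0$, strictly positive whenever $0<p_\epsilon<1$. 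Alternatively one compares successive tail probabilities through the ratio of binomial coefficients, or uses the representation $P(\hat{\mathcal{B}})=I_{p_\epsilon}(\mathcal{B},\hat{\mathcal{B}}-\mathcal{B}+1)$ with the regularized incomplete beta function, which is increasing in its second argument. It is in this direct term-by-term route that the stated hypothesis $p_\epsilon<\tfrac{\hat{\mathcal{B}}-\mathcal{B}+2}{\hat{\mathcal{B}}+1}$ (equivalently $(1-p_\epsilon)(\hat{\mathcal{B}}+1)>\mathcal{B}-1$) naturally certifies the increase; note it is automatically satisfied once $\hat{\mathcal{B}}$ is large, hence compatible with the limit below.

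For the convergence I would bound the complementary event $1-P(\hat{\mathcal{B}})=\mathbb{P}(N\le\mathcal{B}-1)=\sum_{k=0}^{\mathcal{B}-1}\binom{\hat{\mathcal{B}}}{k}p_\epsilon^{k}(1-p_\epsilon)^{\hat{\mathcal{B}}-k}$. This is a fixed number ($\mathcal{B}$) of terms, each a polynomial in $\hat{\mathcal{B}}$ of degree $k\le\mathcal{B}-1$ times the exponentially decaying factor $(1-p_\epsilon)^{\hat{\mathcal{B}}-k}$, so for any fixed $p_\epsilon\in(0,1)$ every term tends to $0$ and $P(\hat{\mathcal{B}})\to 1$. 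A one-line alternative is the law of large numbers: $\mathbb{E}[N]=\hat{\mathcal{B}}p_\epsilon\to\infty$, whence $\mathbb{P}(N<\mathcal{B})\to 0$.

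The main obstacle is not the limit, which is routine, but reconciling the clean proof with the precise hypothesis. The coupling argument already delivers monotonicity \emph{unconditionally}, so the hard part is recognizing that $p_\epsilon<\tfrac{\hat{\mathcal{B}}-\mathcal{B}+2}{\hat{\mathcal{B}}+1}$ is a sufficient (rather than necessary) technical ingredient—the form in which the successive-ratio/incomplete-beta comparison certifies a strict increase—and verifying that it is consistent with letting $\hat{\mathcal{B}}\to\infty$. The genuinely load-bearing insight remains the Step-1 reduction: that unimodality makes value-ordering separate ``inside'' from ``outside'' samples cleanly, turning an $\epsilon$-ball concentration statement into a bare binomial tail.
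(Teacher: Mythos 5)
Your proposal is correct, and its first step coincides with the paper's: since the $\epsilon$-neighborhood is defined by the value condition $|f(\bm\tau)-f(\bm\tau^*)|\le\epsilon$, i.e., it is the super-level set $\{f\ge f(\bm\tau^*)-\epsilon\}$, the event that all of $\mathcal{T}^{\mathcal{B}}$ lies inside it is exactly the event that at least $\mathcal{B}$ of the $\hat{\mathcal{B}}$ i.i.d.\ draws land inside it, so the concentration probability is a binomial tail. From there the two arguments genuinely diverge, and yours is the sounder one on three counts. First, your tail formula $\mathbb{P}(N\ge\mathcal{B})=1-\sum_{k=0}^{\mathcal{B}-1}\binom{\hat{\mathcal{B}}}{k}p_\epsilon^{k}(1-p_\epsilon)^{\hat{\mathcal{B}}-k}$ is the correct one; the paper writes the subtracted terms as $p_\epsilon^{\hat{\mathcal{B}}-i+1}(1-p_\epsilon)^{i-1}\binom{\hat{\mathcal{B}}}{i-1}$, with the roles of $p_\epsilon$ and $1-p_\epsilon$ interchanged (as written, this is the distribution of the count of samples \emph{outside} the neighborhood, not inside). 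Second, the paper proves monotonicity term by term via the consecutive-term ratio $\frac{\hat{\mathcal{B}}-i+2}{p_\epsilon(\hat{\mathcal{B}}+1)}>1$, which is exactly where the hypothesis $p_\epsilon<\frac{\hat{\mathcal{B}}-\mathcal{B}+2}{\hat{\mathcal{B}}+1}$ is consumed; your coupling argument ($N_{\hat{\mathcal{B}}+1}=N_{\hat{\mathcal{B}}}+\mathrm{Bernoulli}(p_\epsilon)$) gives the exact increment $p_\epsilon\binom{\hat{\mathcal{B}}}{\mathcal{B}-1}p_\epsilon^{\mathcal{B}-1}(1-p_\epsilon)^{\hat{\mathcal{B}}-\mathcal{B}+1}\ge 0$ and hence proves monotonicity \emph{unconditionally}, correctly exposing the stated hypothesis as superfluous. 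One caveat on your closing remark: the stated hypothesis is what the successive-ratio route certifies for the paper's swapped formula; for the correct complementary sum the analogous term-by-term condition would be the mirrored inequality $p_\epsilon>\frac{\mathcal{B}-1}{\hat{\mathcal{B}}+1}$, and individual terms of the correct sum can in fact grow with $\hat{\mathcal{B}}$ --- which is precisely why the coupling, which controls only the sum, is the right tool. Third, the paper's proof ends at monotonicity and never establishes the claimed convergence to $1$; your polynomial-times-exponential bound (or the law-of-large-numbers one-liner) supplies that missing step, so your write-up is the more complete of the two.
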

\vspace{-2pt}

As implied, with the increase of $\hat{\mathcal{B}}$ and fixed $\mathcal{B}$, the Top-$\mathcal{B}$ operator tends to select the subset in a small neighborhood of $\arg\max_{\bm{\tau}}f(\bm{\tau})$, which corresponds to the subset concentration issue in MPTS, over-optimizes a local region and hampers task space exploration.
Due to the use of Top-$\mathcal{B}$ operator in batch optimization, we also hypothesize a similar phenomenon in DRM \citep{wang2024simple,lv2024theoretical}.
Next, we will propose a natural plausible mechanism to enlarge $\hat{\mathcal{B}}$ in RATS without suffering concentration pains.

\textbf{Diversity Regularization \& Robustness Concept.}
Our strategy is to encourage the coverage of the task space during subset selection.
Specifically, rather than selecting individual candidate tasks based on their acquisition score, we evaluate task batches based on both adaptation risk and task diversities in the subset.
This formulation constructs a diversity maximization problem \cite{wang2023max}:
\vspace{-2pt}
\begin{equation}
\max_{\mathcal{T}^{\mathcal{B}}\subseteq\mathcal{T}^{\hat{\mathcal{B}}}:|\mathcal{T}^{\mathcal{B}}|=\mathcal{B}}\mathcal{A}(\mathcal{T}^{\mathcal{B}})+\gamma\mathcal{S}\left[\{d(\bm\tau_i,\bm\tau_j)\}\right]
\label{eq_diversity_max}
\end{equation}
where $\mathcal{S}$ measures the diversity of the subset $\mathcal{T}^{\mathcal{B}}$ from identifiers' pairwise distances, e.g., $\sum_{i,j}||\bm\tau_i-\bm\tau_j||_2^2$.

Though the involvement of the regularization term makes the subset selection problem NP-hard, it can be solved using simple approximate algorithms \cite{borodin2017max,wang2023max}.

\begin{proposition}[Nearly Worst-Case Optimization with PDTS]\label{prop_worst_case_concept}
    When $\hat{\mathcal{B}}$ grows large enough, optimizing the subset from Eq. (\ref{eq_diversity_max}) achieves nearly worst-case optimization. 
\end{proposition}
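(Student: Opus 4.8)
The plan is to show that, as $\hat{\mathcal{B}}\to\infty$, the maximizer of Eq. (\ref{eq_diversity_max}) consists almost entirely of tasks whose acquisition/risk score lies within an arbitrarily small band of the worst-case value, so that the induced policy update coincides, up to a controllable tolerance, with minimizing $\lim_{\alpha\to 1}\text{CVaR}_{\alpha}$ from Eq. (\ref{eq_drm}). I would work under the standing regularity of \textbf{Proposition} \ref{prop:concentration}: the support of $p(\tau)$ is compact and the acquisition score $a(\bm\tau)$ (the per-task summand of $\mathcal{A}$, e.g. the posterior-sampled risk) is continuous in $\bm\tau$, attaining its maximum $a^\star$ on the worst region. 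Two consequences do the bookkeeping: (i) the pairwise-distance diversity $\mathcal{S}$ is bounded, $0\le \mathcal{S}\le S_{\max}$, with $S_{\max}$ a constant depending only on the domain diameter and $\mathcal{B}$, hence independent of $\hat{\mathcal{B}}$; (ii) for every $\epsilon>0$ the superlevel set $\mathcal{W}_\epsilon=\{\bm\tau: a(\bm\tau)\ge a^\star-\epsilon\}$ is open and nonempty, so it carries positive mass $p_\epsilon>0$ under $p(\tau)$.

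First I would establish a densification step. Since the $\hat{\mathcal{B}}$ candidates are i.i.d. from $p(\tau)$, the count landing in $\mathcal{W}_\epsilon$ is $\mathrm{Binomial}(\hat{\mathcal{B}},p_\epsilon)$, which exceeds $\mathcal{B}$ with probability $\to 1$; conditioned on being in $\mathcal{W}_\epsilon$, these points become dense in $\mathcal{W}_\epsilon$. Consequently the best diversity attainable using only worst-region candidates, $\mathcal{S}^\star_{\mathcal{W}_\epsilon}(\hat{\mathcal{B}})$, converges to the continuous diversity optimum $\mathcal{S}^\star_{\mathcal{W}_\epsilon}$ over $\mathcal{W}_\epsilon$, a fixed positive constant since $\mathcal{W}_\epsilon$ is $d$-dimensional and contains $\mathcal{B}$ well-separated points. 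This yields a reference subset $\mathcal{T}^{\mathcal{B}}_{\mathrm{ref}}\subseteq\mathcal{W}_\epsilon$ whose objective is lower bounded by $\mathcal{B}(a^\star-\epsilon)+\gamma\big(\mathcal{S}^\star_{\mathcal{W}_\epsilon}-o(1)\big)$.

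Next I would run the trade-off comparison against the true maximizer $\mathcal{T}^{\mathcal{B}\dagger}$ of Eq. (\ref{eq_diversity_max}). Optimality gives $\mathcal{A}(\mathcal{T}^{\mathcal{B}\dagger})+\gamma\mathcal{S}(\mathcal{T}^{\mathcal{B}\dagger}) \ge \mathcal{A}(\mathcal{T}^{\mathcal{B}}_{\mathrm{ref}})+\gamma\mathcal{S}(\mathcal{T}^{\mathcal{B}}_{\mathrm{ref}})$; bounding $\mathcal{S}(\mathcal{T}^{\mathcal{B}\dagger})\le S_{\max}$ and rearranging isolates the average selected score, $\tfrac{1}{\mathcal{B}}\mathcal{A}(\mathcal{T}^{\mathcal{B}\dagger})\ge a^\star-\epsilon-\tfrac{\gamma}{\mathcal{B}}\big(S_{\max}-\mathcal{S}^\star_{\mathcal{W}_\epsilon}\big)-o(1)$, while trivially $\tfrac{1}{\mathcal{B}}\mathcal{A}(\mathcal{T}^{\mathcal{B}\dagger})\le a^\star$. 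Letting $\hat{\mathcal{B}}\to\infty$ kills the $o(1)$, so the mean selected score is pinned to $a^\star$ up to $\epsilon+\tfrac{\gamma}{\mathcal{B}}(S_{\max}-\mathcal{S}^\star_{\mathcal{W}_\epsilon})$; since $\epsilon$ is arbitrary and the diversity gap is a bounded $O(\gamma)$ term, the selected tasks are worst-case up to this tolerance, which is precisely the ``nearly'' qualifier. Identifying this band of near-maximal-risk tasks with the $(1-\tfrac{\mathcal{B}}{\hat{\mathcal{B}}})$-tail of $\text{CVaR}$ (as recorded in Sec. \ref{subsec:diversity}) and sending $\tfrac{\mathcal{B}}{\hat{\mathcal{B}}}\to 0$ then certifies that the policy step approximates $\min_{\bm\theta}\lim_{\alpha\to1}\text{CVaR}_\alpha(\bm\theta)$, i.e. nearly worst-case optimization.

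The main obstacle I anticipate is the acquisition--diversity trade-off in the last step: a priori the regularizer could reward spreading the batch into low-risk but far-apart regions, and one must rule this out uniformly in $\hat{\mathcal{B}}$. The resolution above hinges on two facts I would need to make fully rigorous — that the worst region itself densifies enough to host a near-diversity-optimal batch (so the optimizer never has to leave $\mathcal{W}_\epsilon$ to gain diversity), and that $\mathcal{S}$ is bounded independently of $\hat{\mathcal{B}}$ (so the diversity incentive cannot scale up to overwhelm the fixed per-task risk gap). A secondary, more technical point is upgrading the deterministic densification statements to high-probability ones and controlling the posterior-sampling noise in $a(\bm\tau)$, which I would absorb into the $o(1)$ term under the assumption that the risk predictive model is calibrated on the worst region.
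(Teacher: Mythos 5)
Your proof is correct (under the regularity you state) and is genuinely more substantial than the paper's own argument, which is only a three-sentence sketch: the paper observes that with $\mathcal{B}$ fixed the ratio $\mathcal{B}/\hat{\mathcal{B}}$ tends to zero, identifies the induced robustness concept with $\text{CVaR}_{1-\mathcal{B}/\hat{\mathcal{B}}}$ (hence worst-case in the limit), and then simply asserts that the diversity regularization ``perturbs the worst arm selection,'' yielding the ``nearly'' qualifier. The paper never rules out the failure mode you explicitly flag --- that the regularizer could reward scattering the batch into low-risk but far-apart regions --- whereas your reference-subset comparison does exactly that: because $\mathcal{S}$ is bounded by $S_{\max}$ independently of $\hat{\mathcal{B}}$, and because the worst region $\mathcal{W}_\epsilon$ densifies enough to host a near-diversity-optimal batch, the optimality of $\mathcal{T}^{\mathcal{B}\dagger}$ pins its average score to within $\epsilon + \tfrac{\gamma}{\mathcal{B}}\bigl(S_{\max} - \mathcal{S}^\star_{\mathcal{W}_\epsilon}\bigr)$ of $a^\star$. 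This makes ``nearly'' quantitative (a fixed $O(\gamma/\mathcal{B})$ gap that does not vanish as $\hat{\mathcal{B}}\to\infty$, which is exactly the perturbation the paper gestures at) and makes explicit the assumptions --- compact support, continuity of the score, positive mass of every superlevel set --- under which the claim holds at all. What the paper's route buys is brevity matching the informal nature of the statement; what yours buys is an actual theorem: the two facts you isolate (bounded diversity incentive, densification of the worst region) are precisely the lemmas missing from the paper's proof, and your remaining caveats (high-probability versions of the densification step, calibration of the posterior-sampled score) correctly identify the only places where further rigor is still needed.
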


\textbf{Proposition} \ref{prop_worst_case_concept} indicates that the regularized acquisition rule in Eq. (\ref{eq_diversity_max}) enables exploration of the worst subset from numerous candidate arms while retaining the subset's coverage of the task space to a certain level.
Compared with \citep{collins2020task,sagawa2019distributionally}, such a regularization will show its effectiveness in experiments.

\subsection{Practical Sampling with Stochastic Optimism}\label{subsec:ps}

So far, we have presented the i-MAB as a theoretical tool for RATS and interpreted MPTS as the UCB-guided solution.
However, the UCB acquisition rule requires multiple stochastic forward passes for each task's evaluation, and computations grow with $\hat{\mathcal{B}}$.
It also demands calibration of exploration and exploitation weights in subset search.

To cut off unnecessary computations and retain the uncertainty optimism, we adopt the posterior sampling strategy as the acquisition principle for RATS.
The posterior sampling \citep{osband2013more,asmuth2012bayesian} is an extension of Thompson sampling \citep{thompson1933likelihood} in solving MDPs with infinite actions.
The reward or action value is treated as a randomized function, and each arm's value, sampled from the posterior once, serves action selection, i.e., $\mathcal{A}_{\text{P}}(\mathcal{T}^{\mathcal{B}})=\sum_{i=1}^{\mathcal{B}}\hat{\ell}_{t+1,i}$ in Eq. (\ref{eq_pdts_workflows}).
\vspace{-4pt}
\begin{mdframed}[roundcorner=1pt, backgroundcolor=yellow!8,innerrightmargin=7pt,innertopmargin=-5pt]
\begin{subequations}\label{eq_pdts_workflows}
\setlength{\arraycolsep}{1pt}
\begin{align}
\small
{\text{One Forward Pass:}}\quad\bm z_{t}\sim q_{\bm\phi}(\bm z_{t}\vert H_{t})\\
\small
\hat{\ell}_{t+1,i}\sim p_{\bm\psi}(\ell\vert\hat{\bm\tau}_{i},\bm z_{t})\quad\forall i\in\{1,\dots,\hat{\mathcal{B}}\}\\
\small
\mathcal{T}_{t+1}^{\mathcal{B}*}=\arg\max_{\substack{\mathcal{T}^{\mathcal{B}}\subseteq\mathcal{T}^{\hat{\mathcal{B}}}\\|\mathcal{T}^{\mathcal{B}}|=\mathcal{B}}}\mathcal{A}_{\text{P}}(\mathcal{T}^{\mathcal{B}})+\gamma\mathcal{S}\left[\{d(\bm\tau_i,\bm\tau_j)\}\right]
\end{align}
\end{subequations}
\end{mdframed}

As implied in \citep{russo2014learning}, posterior sampling avoids over-exploitation of inaccurate estimated uncertainty, and benefits more from stochasticity.
Together with diversity regularization, we obtain PDTS as a new RATS method in Eq. (\ref{eq_pdts_workflows}).
Intuitively, diversity penalty and posterior sampling suppress inaccurate Top-$\mathcal{B}$ operations and synergize exploration in the task space.
PDTS enjoys implementation simplicity, computational efficiency and stochastic optimism in decision-making.

\vspace{-1pt}
\subsection{Overall Implementation}

Putting the above modules together, we present PDTS in Algorithm \ref{alg_pdts}, which is easily wrapped into DR and Meta-RL (See Algorithm \ref{dr_pseudo} and \ref{maml_pseudo}).

\begin{figure*}[ht]
    \centering
    \includegraphics[width=0.9\linewidth]{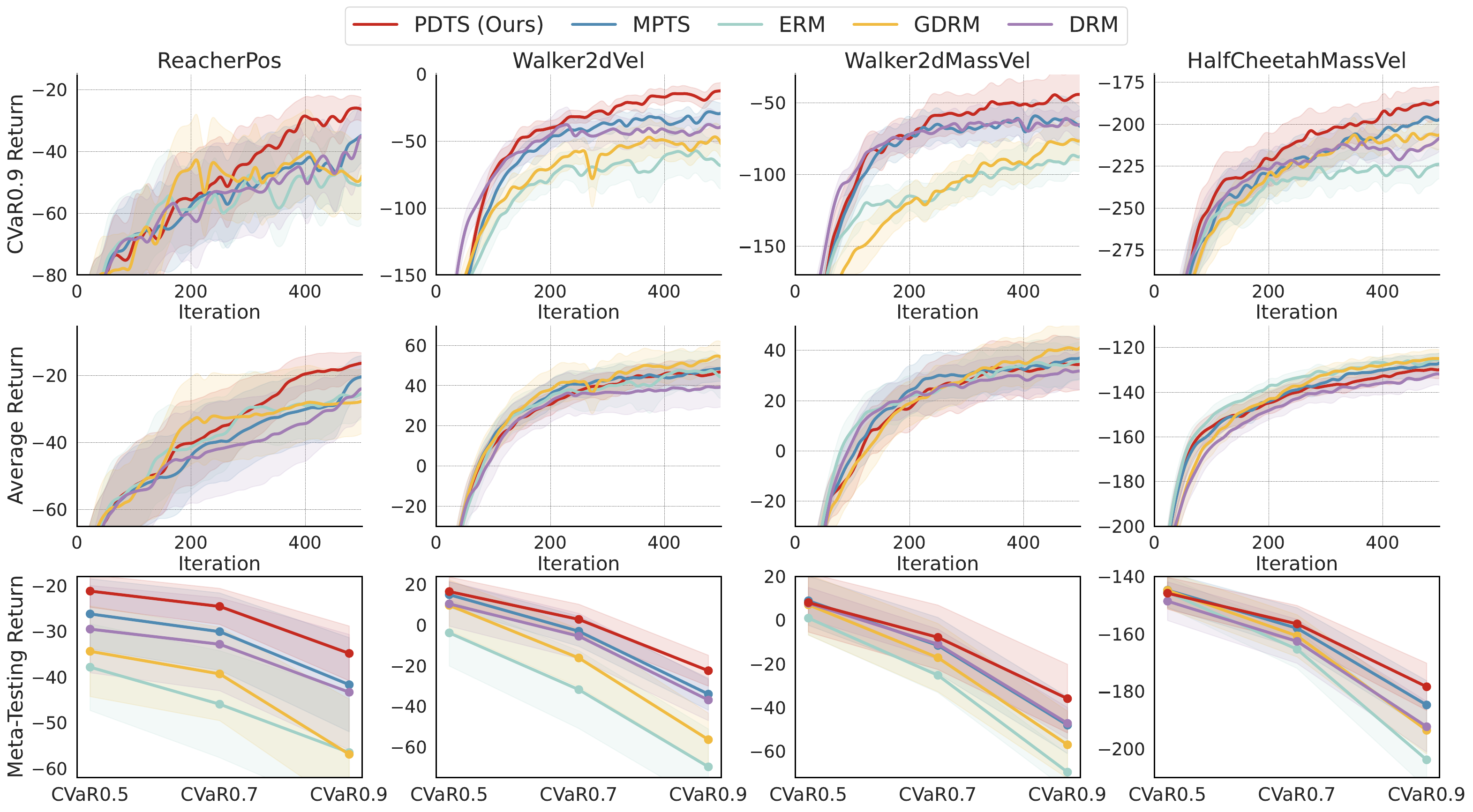}
    \vspace{-10pt}
    \caption{\textbf{Meta-RL Results.} The top depicts the cumulative return curves for $\text{CVaR}_{0.9}$ validation MDPs during meta-training; the middle shows the average cumulative returns curves during meta-training; and the bottom presents the meta-testing results with various $\alpha$.
    }
    \label{fig:metarl}
\end{figure*}

\begin{algorithm}[H]
\SetAlgoLined
\SetKwInOut{Input}{Input}
\SetKwInOut{Output}{Output}
\Input{Task distribution $p(\tau)$;
 Task batch size $\mathcal{B}$;
 Candidate batch size $\hat{\mathcal{B}}$;
 Latest updated $\{\bm\psi,\bm\phi\}$;
 Latest history $H_{t-1}$;
 Iteration number $K$;
 Learning rate $\lambda_{2}$.}
\Output{Selected task identifier batch $\{\bm\tau_{t,i}\}_{i=1}^{\mathcal{B}}$.}

\tcp{\textcolor{red}{\textbf{Optimize Risk Predictive Module}}}
\For{$i=1$ \KwTo $K$}
{   
Perform gradient updates given $H_{t-1}$:

\quad $\bm\phi\leftarrow\bm\phi+\lambda_{2}\nabla_{\bm\phi}\mathcal{G}_{\text{ELBO}}(\bm\psi,\bm\phi)$ in Eq. (\ref{eq_stream_vi_obj});

\quad 
$\bm\psi\leftarrow\bm\psi+\lambda_{2}\nabla_{\bm\psi}\mathcal{G}_{\text{ELBO}}(\bm\psi,\bm\phi)$ in Eq. (\ref{eq_stream_vi_obj});

}

\tcp{\textcolor{red}{\textbf{Simulating After-Adaptation Results}}}
Randomly sample $\{\bm\hat{\bm\tau}_{t,i}\}_{i=1}^{\hat{\mathcal{B}}}$ from $p(\tau)$;

\tcp{\textcolor{red}{\textbf{Posterior Sampling Outcome}}}
Amortized evaluation $\{\hat{\ell}_{t,i}\}_{i=1}^{\hat{\mathcal{B}}}$ with one stochastic forward pass for all candidate tasks through executing Eq. (\ref{eq_pdts_workflows})a-b;

\tcp{\textcolor{red}{\textbf{Diversity-Guided Subset Search}}}
Run approximate algorithms to solve Eq. (\ref{eq_pdts_workflows})c;

Return the screened subset $\{\bm\tau_{t,i}\}_{i=1}^{\mathcal{B}}$ for next iteration.

\caption{Posterior-Diversity Synergized Task Sampling}
\label{alg_pdts}
\end{algorithm}\vspace{-10pt}

\section{Experiments}

This section presents experimental results on Meta-RL and robotics DR involving randomized physics properties.

\textbf{Risk-Averse Baselines.}
For fair comparison, we retain the standard setup in \citep{wang2025modelpredictivetasksampling} and use SOTA baselines as described in Section~\ref{sec:robustmethod}: ERM~\citep{vapnik1998statistical}, DRM~\citep{wang2024simple,greenberg2024train}, GDRM~\citep{sagawa2019distributionally}, and MPTS.
Following implementations in \citep{wang2025modelpredictivetasksampling, tao2024maniskill3}, 
we use MAML~\citep{finn2017model} as the backbone algorithm in Meta-RL and employ TD3~\citep{fujimoto2018addressing} and PPO~\citep{schulman2017proximal} for domain randomization, respectively.

We run each algorithm on seven random seeds and report the average performance with standard error of means.
For adaptation robustness, we compute $\text{CVaR}_{\alpha}$ values across the validation tasks, with $\alpha=\{0.9, 0.7, 0.5\}$, and also report some out-of-distribution (OOD) results.
Importantly, the pseudo batch size is set to $\hat{\mathcal{B}}=64\mathcal{B}$ as default for PDTS.

\subsection{Meta Reinforcement Learning}

We consider Meta-RL continuous control scenarios based on MuJoCo \cite{todorov2012mujoco}: Reacher, Walker2d, and HalfCheetah.
These include identifiers: target position, velocity, and mass \cite{wang2025modelpredictivetasksampling}.

Fig. \ref{fig:metarl} reveals PDTS consistent superiority over others in $\text{CVaR}_{0.9}$ validation return, demonstrating excellent adaptation robustness.
Owing to intrinsic sampling randomness, PDTS and MPTS achieve average performance comparable to ERM, while DRM struggles to improve robustness and sacrifices more average returns.
Surprisingly, benefiting from exploration, PDTS boosts both adaptation robustness and task efficiency on ReacherPos.
We will further discuss this in Section~\ref{sec:dr}.
Regarding meta-testing, PDTS excels across $\text{CVaR}$ values, consistent with the learning curves.
Notably, PDTS's performance advantage increases with higher $\alpha$ values.
In the extreme scenario $\text{CVaR}_{0.9}$, PDTS outperforms others by more than 15\% on all benchmarks except HalfCheetahMassVel.

\begin{figure*}[ht]
    \centering
    \includegraphics[width=0.9\linewidth]{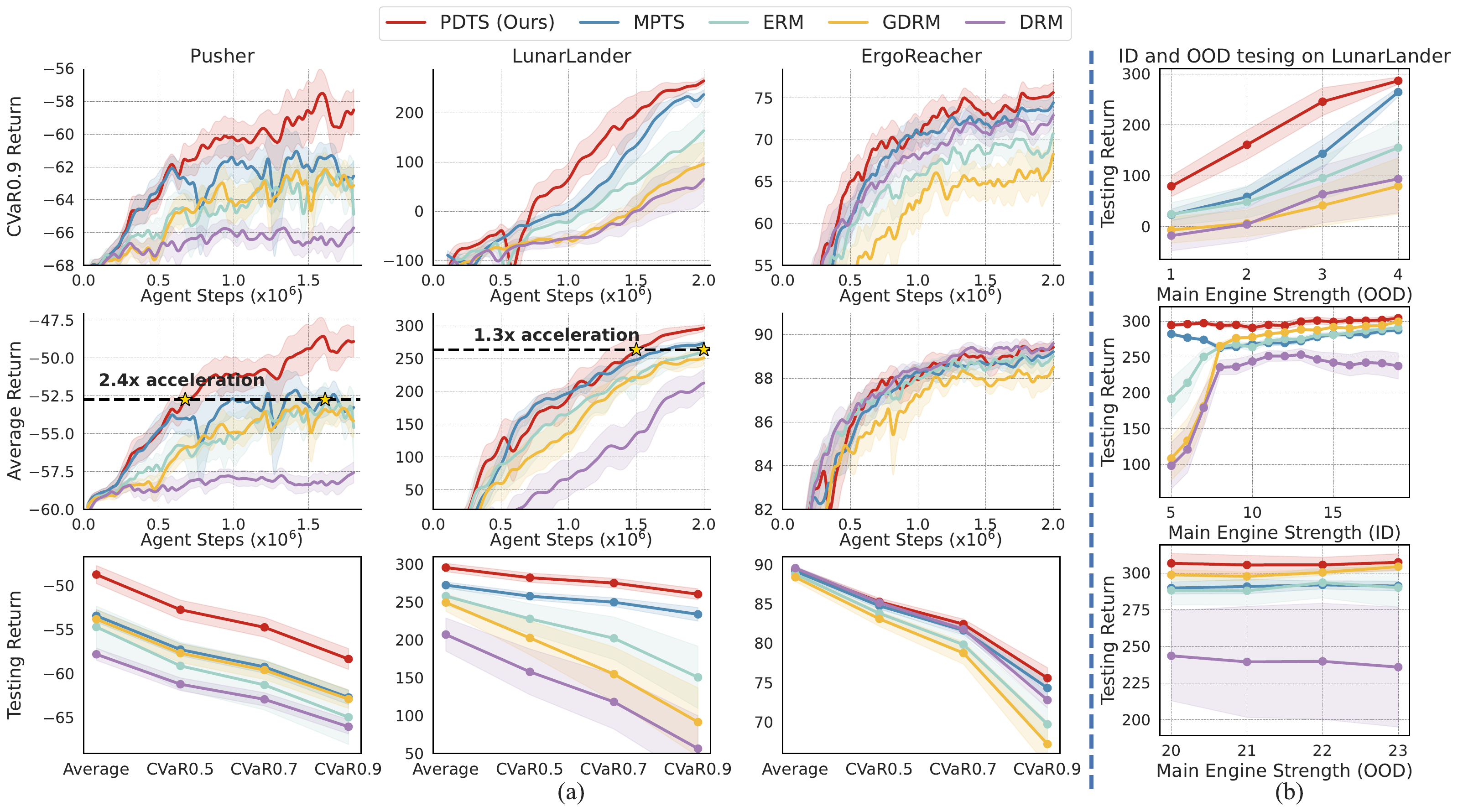}
    \vspace{-10pt}
    \caption{\textbf{Physical Robotics DR Results}. (a) The top shows the cumulative return curves for $\text{CVaR}_{0.9}$ validation MDPs during training; the middle displays the average cumulative return curves across all validation MDPs during training; and the bottom presents the test results at various $\text{CVaR}_{\alpha}$. (b) We evaluate the trained policies in both in-distribution (ID) and out-of-distribution (OOD) domains on LunarLander, reporting the average returns for each sampled task.
    }
    \label{fig:dr}
\end{figure*}

\textbf{PDTS is agnostic to the meta-learning backbone.}
PDTS is a plug-and-play module agnostic to the meta-learning backbone. 
Here, we integrate PDTS with PEARL \cite{rakelly2019efficient} and compare it with MPTS and RoML \cite{greenberg2024train}, built on the same backbone. 
As shown in Table \ref{tab:roml}, we evaluate these methods on two Meta-RL scenarios from RoML. 
PDTS outperforms others in terms of CVaR return, further examining its effectiveness and scalability in risk-averse decision-making.

\begin{table}[htbp]
\vspace{-13pt}
  \centering
  \caption{
  PDTS's compatibility with PEARL backbone. 
  Baselines are MPTS and RoML \citep{greenberg2024train}.
  }
  \resizebox{\linewidth}{!}{
    \begin{tabular}{ccccc}
    \toprule
    \textbf{$\text{CVaR}_{0.95}$ Return} & \textbf{PDTS} & \textbf{MPTS} & \textbf{RoML} & \textbf{PEARL} \\
    \midrule
    HalfCheetahBody & \textbf{993$\pm$26} & 945$\pm$26 & 855$\pm$35 & 847$\pm$42 \\
    HalfCheetahMass & \textbf{1296$\pm$41} & 1209$\pm$45 & 1197$\pm$59 & 1118$\pm$51 \\
    \bottomrule
    \end{tabular}%
  \label{tab:roml}%
  }
  \vspace{-15pt}
\end{table}%

\subsection{Physical Robotics Domain Randomization}\label{sec:dr}

We evaluate PDTS on three DR scenarios introduced by \citet{mehta2020active}: a game scenario, LunarLander, and two robotic arm control scenarios, Pusher and ErgoReacher.

Empirical findings in Fig.~\ref{fig:dr}(a) are consistent with those in Meta-RL, where PDTS dominates the robustness performance.
Notably, PDTS's advantage is more pronounced on Pusher and LunarLander, where the identifiers' dimension is lower than ErgoReacher's.
This is likely because low-dimensional task identifiers exacerbate the concentration issue, limiting the performance of MPTS.
GDRM and DRM perform poorly in both average performance and robustness.
Given a fixed batch size, this weakness may stem from their limited exploration capacity.
Regarding final testing performance, PDTS excels across $\text{CVaR}$ values.
Specifically, PDTS outperforms ERM by more than 8\% on all benchmarks in $\text{CVaR}_{0.9}$, and by as much as 73\% on LunarLander.

\begin{figure*}[ht]
    \centering
    \includegraphics[width=0.9\linewidth]{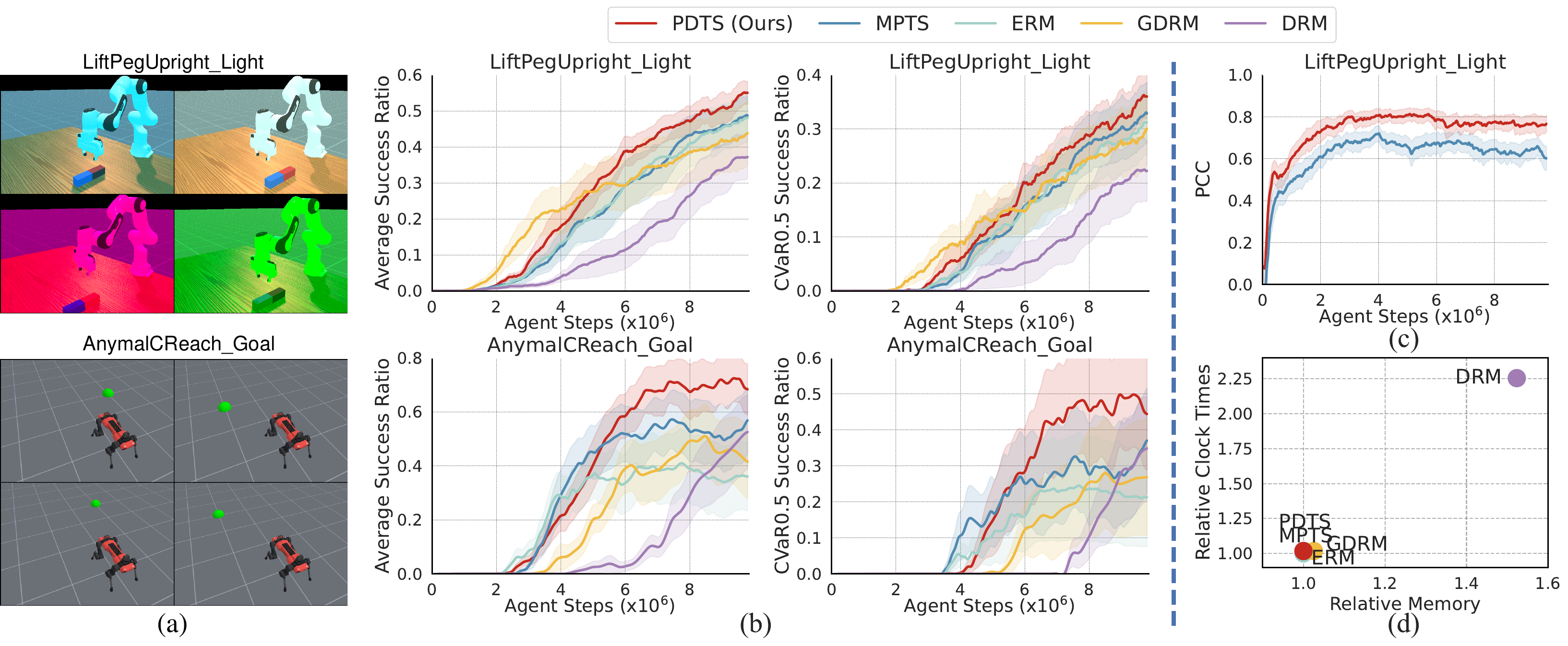}
    \vspace{-10pt}
    \caption{\textbf{Visual Robotics DR results.} (a) Illustrations of two visual DR scenarios. (b) Curves of the average success ratio and the $\text{CVaR}_{0.5}$ success ratio on validation tasks during training. (c) Training curves of PCC values between predicted and true episode returns. (d) Memory cost and clock time relative to ERM during meta-training.
}
    \label{fig:vdr}
\end{figure*}

\textbf{PDTS improves task efficiency in specific scenarios.}
Beyond adaptation robustness, PDTS shows the potential to accelerate training in specific scenarios.
As shown in Fig.~\ref{fig:dr}(a), PDTS achieves average returns comparable to ERM's best performance but with fewer training steps, achieving $2.4\times$ acceleration in Pusher and $1.3\times$ in LunarLander.
We attribute this acceleration to the efficient exploration of acquisition criteria.
Hence, PDTS holds promise for achieving robustness with reduced training steps in broader scenarios.

\textbf{PDTS achieves superior zero-shot policy adaptation in OOD MDPs.}
On Lundarlander, Fig.~\ref{fig:dr}(b) witnesses PDTS's highest average test returns across sampled tasks.
To evaluate zero-shot adaptation in OOD MDPs, we shift the identifier interval $\bm{\tau} \in [4.0, 20.0]$ to the OOD range $\bm{\tau} \in [1.0, 4.0) \cup (20.0, 23.0]$.
Notably, PDTS exhibits the smallest performance degradation, particularly in the most challenging OOD tasks ($\bm{\tau} \in [1.0, 4.0)$).

\subsection{Visual Robotics Domain Randomization}

Vision-based robotics control is more challenging and underexplored in MPTS.
With the latest robotics simulator, ManiSkill3 \citep{tao2024maniskill3}, we design two visual DR scenarios: one randomizing lighting with a table-top two-finger gripper arm robot called LiftPegUpright\_Light, and another randomizing goal locations with a quadruped robot called AnymalCReach\_Goal, as illustrated in Fig.~\ref{fig:vdr}(a).

PDTS achieves best robustness performance in Fig.~\ref{fig:vdr}(b), same as symbolic scenarios.
Moreover, PDTS also exhibits a trend of accelerated training in terms of average performance, highlighting its potential for realistic scenarios requiring both fast and robust adaptation.
Due to insufficient task exploration, DRM performs poorly and probably gets trapped in challenging tasks.
ERM obtains the worst final performance on AnymalCReach\_Goal.
These findings underscore the importance of robust optimization with sufficient exploration, which may explain PDTS's success.

\textbf{PDTS can well discriminate task difficulties.}
We evaluate the Pearson Correlation Coefficient (PCC) between the predicted episode returns and the exact values during training.
As shown in Fig.~\ref{fig:vdr}(c), PDTS inherits MDPs' difficulty scoring capability of MPTS, with both PCC values greater than $0.5$.
In particular, PDTS exceeds MPTS in predicting accuracies, which may be attributed to the extended exploration and the stochastic optimism in posterior sampling.

\textbf{PDTS offers learning efficiency advantages at minimal cost.}
Like MPTS, PDTS avoids additional evaluation costs associated with interaction and rendering, making it more beneficial in vision-based RL tasks.
From relative clock time and memory usage in Fig.~\ref{fig:vdr}(d), we find PDTS retains computational efficiency with others except DRM.
As a result, PDTS is a robust and scalable approach for complex, interaction-intensive tasks.

\section{Conclusion}

\textbf{Technical Discussions.}
This work studies RATS's use for learning adaptive decision-makers in risk-averse decision-making.
We present the i-MAB as a theoretical tool to enable RATS and establish theoretical connections with its latest method \citep{wang2025modelpredictivetasksampling}.
Built on these insights, we propose PDTS as a competitive RATS method to achieve nearly worst-case optimization.
Extensive DR and Meta-RL experiments validate the effectiveness of RATS and show its efficiency potential in risk-averse scenarios.

\textbf{Limitations \& Extensions.}
Our approach relies on the risk predictive model for roughly scoring task difficulties and leverages identifier information alongside the inherent smoothness of the adaptation risk function, though these assumptions may not always hold in restricted scenarios. 

While robust active learning remains underexplored in sequential decision-making, reducing computational and sample expenses and improving robustness are crucial for building large-scale decision models. 
The models and algorithms introduced in this work offer a tractable strategy with strong empirical performance toward achieving RATS goals. 
Future explorations can include designing more accurate risk-predictive models to enhance amortized evaluation reliability, as well as integrating stronger robust optimization techniques into i-MAB frameworks to develop practical and scalable RATS methods.

\section*{Impact Statement}

This paper presents work whose goal is to advance the field of Machine Learning.
There are many potential societal consequences of our work, including enhancing real-world applications of generalized robotics and other decision-making systems.
Although this study significantly improves adaptation robustness, adaptation failures remain possible, potentially leading to serious consequences in real-world scenarios.
Thus, human supervision remains essential.

\section*{Acknowledgments and Disclosure of Funding}

This work is funded by National Natural Science Foundation of China (NSFC) with the Number \# 62306326 and the National Key R\&D Program of China under Grant 2018AAA0102801.
We thank all reviewers for their positive comments and constructive suggestions in this work.

\bibliographystyle{icml2025}
\bibliography{example_paper}

\newpage
\appendix
\onecolumn

\section{Adaptive Decision-Making in Randomized Environments}\label{sec_append_mdp_dist}

\subsection{Zero-Shot and Few-Shot Sequential Decision-Making}

In this part, we include typical MDPs and distributions, which align with randomized environments for sequential decision-making.
Meanwhile, we show examples of DR and Meta-RL in this context, which can well specify the form of the task-specific adaptation risk $\ell$ in zero-shot and a few-shot setup.

\begin{definition}[Identifier-Induced MDP Distribution]\label{def_mdp_dist}
\textit{We consider a distribution over MDPs $p(\mathcal{M}_{\tau})$ induced by the identifier distribution $p(\tau)$.
Here, the MDP as the sequential decision-making environment can be characterized as $\mathcal{M}_{\tau}=<\mathbf{S},\mathbf{A},\mathbf{P}_{\tau},\mathbf{R}_{\tau},\gamma>$,
where $\mathbf{S}$ and $\mathbf{A}$ are the state and the action space shared across all MDPs.}

\textit{MDPs vary in either the transition dynamics $\mathbf{P}_{\tau}:=p_{\tau}(\bm s_{t+1}\vert\bm s_{t},\bm a_{t})$ or the reward function $\mathbf{R}_{\tau}:=R_{\tau}(\bm s_{t},\bm a_{t})$ or the both, specified by corresponding identifiers $\bm\tau$.
The $H$-horizon rollout under a policy is $(\bm s_0,\bm a_0,R(\bm s_0,\bm a_0),\dots,\bm s_{H-1},\bm a_{H-1}, R(\bm s_{H-1},\bm a_{H-1}),\bm s_H)$ with its cumulative reward $\sum_{t=0}^{H-1}\gamma^{t}R(\bm s_t, \bm a_t)$ and $\gamma\in\mathbb{R}^{+}$.}
\end{definition}

With Definition \ref{def_mdp_dist}, we can provide some instantiations of DR and Meta-RL under specific policy optimization backbones.

\textbf{Meta-RL with Model Agnostic Meta Learning.}
Meta-RL seeks to train agents capable of rapidly adapting to new tasks or environments by utilizing knowledge from related tasks.
We adopt MAML \citep{finn2017model} as the standard backbone algorithm due to its widespread application in addressing few-shot sequential decision-making problems.
This also retains the same configuration in MPTS \citep{wang2025modelpredictivetasksampling}.
The optimization objective of MAML is mathematically defined as:
\begin{subequations}
    \begin{align}
        \min_{\bm\theta\in\bm\Theta}\mathbb{E}_{p(\tau)}\left[\ell\left(\mathcal{D}_{\tau}^{Q};\bm\theta-\lambda\nabla_{\bm\theta}\ell\left(\mathcal{D}_{\tau}^{S};\bm\theta\right)\right)\right]\\
        \min_{\bm\theta\in\bm\Theta}\mathbb{E}_{p(\tau)}\left[-\mathbb{E}_{\pi_{\bm\theta'}, \mathcal{D}_{\tau}^{Q}}\left[\sum_{t=0}^H\gamma^tr_t\right];\bm\theta'=\bm\theta-\lambda\nabla_{\bm\theta}\left(-\mathbb{E}_{\pi_{\bm\theta}, \mathcal{D}_{\tau}^{S}}\left[\sum_{t=0}^H\gamma^tr_t\right]\right)\right],
    \end{align}
\end{subequations}
where $\mathcal{D}_{\tau}$ denotes episodes collected from MDPs specified by $\bm\tau$ under either the meta-policy or the fast-adapted policy.
The term inside the brackets specifies the adaptation risk $\ell(\mathcal{D}_{\tau}^{Q}, \mathcal{D}_{\tau}^{S}; \bm\theta)$, which represents either the negative cumulative returns or the negative final rewards. 
The expression $\bm\theta - \lambda\nabla_{\bm\theta}\ell(\mathcal{D}_{\tau}^{S}; \bm\theta)$ indicates the gradient update for fast task adaptation, where $\lambda$ is the learning rate. Upon completion of meta-training, the resulting meta-policy $\bm\theta$ can be generalized across the task space.

\textbf{Robotics Domain Randomization.}
Robotics domain randomization refers to a training paradigm in which an agent is trained across a collection of diverse environments to develop a generalizable policy.
The diversity of these environments enhances the robustness of the resulting policy during deployment. Notably, this approach eliminates the need for few-shot episodes in unseen but similar environments.
Mathematically, the optimization objective can be expressed as:
\begin{equation}\label{eq_dr}
\begin{split}
\max_{\bm\theta\in\bm\Theta}\mathcal{J}(\bm\theta):=\mathbb{E}_{\pi_{\bm\theta}}\mathbb{E}_{p(\tau)}\left[\sum_{t=0}^{H}\gamma^{t}r_{t}\right],
\end{split}
\end{equation}
where $p(\tau)$ denotes the distribution over MDPs specified by $\bm\tau$, and $\{r_{t}\}_{t=0}^{H}$ represents the stepwise rewards obtained from interacting with a specific MDP, with $H$ as the horizon.

After solving the optimization problem in Eq. (\ref{eq_dr}), the resulting policy $\pi_{\bm\theta}$ serves as a zero-shot decision-maker in new environments. In this context, the adaptation risk can be expressed as $\ell(\mathcal{D}_{\tau}^{Q},\mathcal{D}_{\tau}^{S};\bm\theta)=-\sum_{t=0}^{H}\gamma^{t}r_{t}$.
For physical domain randomization, policy optimization utilizes the TD3 algorithm \citep{fujimoto2018addressing}, an off-policy method known for its sample efficiency and stability.
For visual domain randomization, we employ the PPO algorithm \citep{schulman2017proximal}, an on-policy method recommended by ManiSkill3 \cite{tao2024maniskill3}.

\subsection{Pseudo Algorithm in Meta-RL and DR}
In this part, we show how PDTS is incorporated in DR and Meta-RL.
These are Algorithm \ref{dr_pseudo}/\ref{dr_alg_pdts} and Algorithm \ref{maml_pseudo}/\ref{few_alg_pdts}.

\begin{multicols}{2}
\IncMargin{0em}
\begin{algorithm}[H]
\SetAlgoLined
\SetKwInOut{Input}{Input}
\SetKwInOut{Output}{Output}
\Input{Task distribution $p(\tau)$;
 Task batch size $\mathcal{B}$;
 Learning rate $\lambda_1$.}
\Output{Adapted policy $\bm\theta$.}
Set the initial iteration number $t=1$;

Randomly initialize policy $\bm\theta$;

Randomly initialize risk learner $\{\bm\psi,\bm\phi\}$;

\While{not converged}{
Execute \textbf{Algorithm} \ref{dr_alg_pdts} to access the task batch $\{\bm\tau_{t,i}\}_{i=1}^{\mathcal{B}}$;

Sample trajectories for each task with policy $\bm\theta$ to induce $\{\mathcal{D}_{\bm\tau_{t,i}}^{Q}\}_{i=1}^{\mathcal{B}}$;

\tcp{\textcolor{blue}{\textbf{Eval Adaptation Performance}}}
Compute the task specific adaptation risk $\{\ell_{t,i}:=-\mathbb{E}_{\pi_{\bm\theta}, \mathcal{D}_{\bm\tau_{t,i}}^{Q}}\left[\sum_{t=0}^H\gamma^tr_t\right]\}_{i=1}^{\mathcal{B}}$;

Return $H_{t}=\{[\bm\tau_{t,i},\ell_{t,i}]\}_{i=1}^{\mathcal{B}}$ as the Input to \textbf{Algorithm} \ref{dr_alg_pdts};

\tcp{\textcolor{blue}{\textbf{Update Policy}}}
Perform batch gradient updates:

$\bm\theta_{t+1}\leftarrow\bm\theta_{t}-\frac{\lambda_1}{\mathcal{B}}\sum_{i=1}^{\mathcal{B}}\nabla_{\bm\theta}\ell_{t,i}$;

Update the iteration number: $t\leftarrow t+1$;

}
\caption{PDTS for Domain Randomization (Zero-Shot Scenarios)}
\label{dr_pseudo}
\end{algorithm}
\DecMargin{1em}

\vfill
\IncMargin{1em}

\begin{algorithm}[H]
\SetAlgoLined
\SetKwInOut{Input}{Input}
\SetKwInOut{Output}{Output}
\Input{Task distribution $p(\tau)$;
 Task batch size $\mathcal{B}$;
 Candidate batch size $\hat{\mathcal{B}}$;
 Latest updated $\{\bm\psi,\bm\phi\}$;
 Latest history $H_{t-1}$;
 Iteration number $K$;
 Learning rate $\lambda_{2}$.}
\Output{Selected task identifier batch $\{\bm\tau_{t,i}\}_{i=1}^{\mathcal{B}}$.}

\tcp{\textcolor{red}{\textbf{Optimize Risk Predictive Module}}}
\For{$i=1$ \KwTo $K$}
{   
Perform gradient updates given $H_{t-1}$:

\quad $\bm\phi\leftarrow\bm\phi+\lambda_{2}\nabla_{\bm\phi}\mathcal{G}_{\text{ELBO}}(\bm\psi,\bm\phi)$ in Eq. (\ref{eq_stream_vi_obj});

\quad 
$\bm\psi\leftarrow\bm\psi+\lambda_{2}\nabla_{\bm\psi}\mathcal{G}_{\text{ELBO}}(\bm\psi,\bm\phi)$ in Eq. (\ref{eq_stream_vi_obj});

}

\tcp{\textcolor{red}{\textbf{Simulating After-Adaptation Results}}}
Randomly sample $\{\bm\hat{\bm\tau}_{t,i}\}_{i=1}^{\hat{\mathcal{B}}}$ from $p(\tau)$;

\tcp{\textcolor{red}{\textbf{Posterior Sampling Outcome}}}

Amortized evaluation $\{\hat{\ell}_{t,i}\}_{i=1}^{\hat{\mathcal{B}}}$ with one stochastic forward pass for all candidate tasks through executing Eq. (\ref{eq_pdts_workflows})a-b;

\tcp{\textcolor{red}{\textbf{Diversity-Guided Subset Search}}}
Run approximate algorithms to solve Eq. (\ref{eq_pdts_workflows})c;

Return the screened subset $\{\bm\tau_{t,i}\}_{i=1}^{\mathcal{B}}$ for next iteration.

\caption{Posterior-Diversity Synergized Task Sampling}
\label{dr_alg_pdts}
\end{algorithm}
\DecMargin{1em}

\end{multicols}

\begin{multicols}{2}
\IncMargin{0em}
\begin{algorithm}[H]
\SetAlgoLined
\SetKwInOut{Input}{Input}
\SetKwInOut{Output}{Output}
\Input{Task distribution $p(\tau)$;
 Task batch size $\mathcal{B}$;
 Learning rates: $\{\lambda_{1,1},\lambda_{1,2}\}$.}
\Output{Meta-trained initialization $\bm\theta^{\text{meta}}$.}
Set the initial iteration number $t=1$;

Randomly initialize meta policy $\bm\theta^{\text{meta}}$;

Randomly initialize risk learner $\{\bm\psi,\bm\phi\}$;

\While{not converged}{
Execute \textbf{Algorithm} \ref{few_alg_pdts} to access the batch $\{\bm\tau_{t,i}\}_{i=1}^{\mathcal{B}}$;

Sample trajectories for each task with policy $\bm\theta^
{\text{meta}}$ to induce $\{\mathcal{D}_{\bm\tau_{t,i}}^{S}\}_{i=1}^{\mathcal{B}}$;

\tcp{\textcolor{blue}{\textbf{Inner Loop to Fast Adapt}}}
\For{$i=1$ {\bfseries to} $K$}{
Compute the task-specific adaptation risk: $\ell(\mathcal{D}_{\bm\tau_{t,i}}^{S};\bm\theta)=-\mathbb{E}_{\pi_{\bm\theta}, \mathcal{D}_{\bm\tau_{t,i}}^{S}}\left[\sum_{t=0}^H\gamma^tr_t\right]$;

Perform gradient updates as fast adaptation:

$\bm\theta_{t}^{i}\leftarrow\bm\theta_{t}^{\text{meta}}-\lambda_{1,1}\nabla_{\bm\theta}\ell(\mathcal{D}_{\bm\tau_i}^{S};\bm\theta)$;

Sample trajectories $\mathcal{D}_{\bm\tau_{t,i}}^{Q}$ with policy $\bm\theta_{t}^{i}$ for task $\bm\tau_i$;
}

\tcp{\textcolor{blue}{\textbf{Outer Loop to Meta-train}}}
Evaluate fast adaptation performance 
$\{\ell_{t,i}:=-\mathbb{E}_{\pi_{\bm\theta_{t}^{i}}, \mathcal{D}_{\bm\tau_{t,i}}^{Q}}\left[\sum_{t=0}^H\gamma^tr_t\right]\}_{i=1}^{\mathcal{B}}$;

Return $H_{t}=\{[\bm\tau_{t,i},\ell_{t,i}]\}_{i=1}^{\mathcal{B}}$ as the Input to \textbf{Algorithm} \ref{few_alg_pdts};

Perform meta initialization updates:

$\bm\theta_{t+1}^{\text{meta}}\leftarrow\bm\theta_{t}^{\text{meta}}-\frac{\lambda_{1,2}}{\mathcal{B}}\sum_{i=1}^{\mathcal{B}}\nabla_{\bm\theta}\ell_{t,i}$;

Update the iteration number: $t\leftarrow t+1$;

}
\caption{PDTS for Model Agnostic Meta Learning (Few-Shot Scenarios: Meta-RL, Sinusoid Regression)}
\label{maml_pseudo}
\end{algorithm}
\DecMargin{1em}

\vfill
\IncMargin{1em}

\begin{algorithm}[H]
\SetAlgoLined
\SetKwInOut{Input}{Input}
\SetKwInOut{Output}{Output}
\Input{Task distribution $p(\tau)$;
 Task batch size $\mathcal{B}$;
 Candidate batch size $\hat{\mathcal{B}}$;
 Latest updated $\{\bm\psi,\bm\phi\}$;
 Latest history $H_{t-1}$;
 Iteration number $K$;
 Learning rate $\lambda_{2}$.}
\Output{Selected task identifier batch $\{\bm\tau_{t,i}\}_{i=1}^{\mathcal{B}}$.}

\tcp{\textcolor{red}{\textbf{Optimize Risk Predictive Module}}}
\For{$i=1$ \KwTo $K$}
{   
Perform gradient updates given $H_{t-1}$:

\quad $\bm\phi\leftarrow\bm\phi+\lambda_{2}\nabla_{\bm\phi}\mathcal{G}_{\text{ELBO}}(\bm\psi,\bm\phi)$ in Eq. (\ref{eq_stream_vi_obj});

\quad 
$\bm\psi\leftarrow\bm\psi+\lambda_{2}\nabla_{\bm\psi}\mathcal{G}_{\text{ELBO}}(\bm\psi,\bm\phi)$ in Eq. (\ref{eq_stream_vi_obj});

}

\tcp{\textcolor{red}{\textbf{Simulating After-Adaptation Results}}}
Randomly sample $\{\bm\hat{\bm\tau}_{t,i}\}_{i=1}^{\hat{\mathcal{B}}}$ from $p(\tau)$;

\tcp{\textcolor{red}{\textbf{Posterior Sampling Outcome}}}
Amortized evaluation $\{\hat{\ell}_{t,i}\}_{i=1}^{\hat{\mathcal{B}}}$ with one stochastic forward pass for all candidate tasks through executing Eq. (\ref{eq_pdts_workflows})a-b;

\tcp{\textcolor{red}{\textbf{Diversity-Guided Subset Search}}}
Run approximate algorithms to solve Eq. (\ref{eq_pdts_workflows})c;

Return the screened subset $\{\bm\tau_{t,i}\}_{i=1}^{\mathcal{B}}$ for next iteration.

\caption{Posterior-Diversity Synergized Task Sampling}
\label{few_alg_pdts}
\end{algorithm}
\DecMargin{1em}

\end{multicols}

\subsection{Related Work}

\textbf{Risk-Averse Methods and Robustness Evaluation.}
Real-world decision-making scenarios are risk-sensitive \citep{chow2021safe}, and this makes robust optimization an indispensable component of policy optimization \citep{tamar2015policy,chow2015risk,chow2017risk,chow2018risk,pan2019risk,rigter2021risk,greenberg2022efficient}.
Recent advances turn to some risk-averse measures to improve robustness in the task distribution.
These are detailed in Section \ref{sec:robustmethod}, which includes the DRM \citep{lv2024theoretical,wang2024simple}, GDRM \citep{sagawa2019distributionally}, and MPTS \citep{wang2025modelpredictivetasksampling}.
The existing worst-case optimization in meta-learning is \citep{collins2020task}, which relies on special relaxation tricks; otherwise, the optimization can be unstable.
Besides, other inspiring robust methods in task sampling \citep{qi2024meta} have not been applied in DR or Meta-RL.
As MPTS is the latest SOTA RATS method,  we mainly compared PDTS with it in experiments.
RoML \citep{greenberg2024train} also considers the $\text{CVaR}$ optimization in Meta-RL; hence, we include some comparisons given the PEARL \citep{rakelly2019efficient} backbone.
In terms of evaluation, the subpopulation shift and out-of-distribution scenarios are commonly used in the field \citep{koh2021wilds}.

\textbf{Cross-Task Adaptation in Sequential Decision-Making.}
This work focuses on the zero-shot and few-shot decision-making scenarios.
In zero-shot decision-making, the primary technique is to randomize the environments and train the agent in the distribution over environments, which is called domain randomization \citep{tobin2017domain,muratore2018domain,mehta2020active,muratore2021data,tiboni2023domain}.
Meta-learning is a promising paradigm for achieving few-shot adaptation to unseen tasks, avoiding learning from scratch \citep{hospedales2021meta}.
The secret behind its few-shot adaptation capability is to leverage past experience and consolidate these as the prior for fast problem-solving.
There are three primary types of policy adaptation methods that can be seamlessly incorporated into deep RL scenarios.
(i) The optimization-based methods aim to seek a robust meta-initialization of the model that can be adapted to unseen scenarios through fine-tuning \citep{finn2017model,finn2018probabilistic,abbas2022sharp,rajeswaran2019meta,yoon2018bayesian,gupta2018meta,qi2024meta}.
(ii) The context-based methods mostly adopt an encoder and decoder structure in policy networks \citep{xu2021enhancing,wang2022learning,wang2022model,rakelly2019efficient,zintgraf2019varibad,li2020focal}. 
The encoder summarizes the support trajectories into a latent variable to guide the policy adaptation.
(ii) The recurrent methods mainly employ a recurrent neural network to encode the sequential decision-making episodes as the adaptation prior \citep{duan2016rl,ritter2018been}.
All of these zero-shot or few-shot learning is performed in a task episodic way, which means that a batch of MDPs are resampled in each iteration to train the adaptive policy during DR and Meta-RL.

Curriculum learning is also a crucial topic related to adaptive decision-making. 
\citet{dennis2020emergent} develop unsupervised environment design (UED) as a novel paradigm for environment distribution generation and achieve SOTA zero-shot transfer. 
\citet{jiang2021replay} cast prioritized level replay to enhance UED and formulate dual curriculum design for improving OOD and zero-shot performance. 
In \citep{koprulu2023risk}, heavy-tailed distributions are incorporated into the automated curriculum, which leads to robustness improvement. 
\citet{wang2024robust} propose to generate task distributions for meta-RL through adversarial training normalizing flows, which provides a data-driven experimental design pipeline for increasing adaptation robustness.
In contrast, our work emphasizes robust task adaptation under a fixed task distribution. 
Integrating the idea of surrogate evaluation from PDTS into curriculum design could be an interesting direction for future research.

\subsection{$\text{CVaR}_{\alpha}$ as Risk-Averse Metrics}

\begin{definition}[$\text{CVaR}_{\alpha}$]\label{def_cvar}
    \textit{With $\bm\theta$-parameterized model, e.g., a deep RL policy, we can induce a random variable $\ell_i:=\ell(\mathcal{D}_{\tau_i}^{Q},\mathcal{D}_{\tau_i}^{S};\bm\theta)$ from $p(\tau)$.
    Then, we can define the cumulative adaptation risk distribution and its quantile by $F(\ell)$ and $\ell^{\alpha}=\min_{\ell}\{\ell\vert F(\ell)\geq\alpha\}$.
    Following \citep{rockafellar2000optimization}, $\text{CVaR}$ at $\alpha$-level robustness can be expressed as:}
\begin{equation}
    \begin{split}
    \text{CVaR}_{\alpha}[\ell(\mathcal{T};\bm\theta)]=
    \int \ell dF^{\alpha}(\ell;\bm\theta),
    \end{split}
\end{equation}
\textit{Accordingly, the normalized tail risk task distribution is}
\begin{equation}
    \begin{split}
        F^{\alpha}(\ell;\bm\theta)=
        \begin{cases}
        0, & l<\ell^{\alpha}\\
        \frac{F(\ell;\bm\theta)-\alpha}{1-\alpha}, & l\geq\ell^{\alpha},
        \end{cases}
    \end{split}
\end{equation}
\textit{with $p_{\alpha}(\tau;\bm\theta)$ as its probability density function.}
\end{definition}

\begin{assumption}[Lipschitz Continuity]\label{assum_cvar_lip}
The adaptation risk function $\ell(\cdot;\bm\theta)$ is $\beta_{\tau}$-Lipschitz continuous w.r.t. $\bm\theta$ and $\beta_{\theta}$-Lipschitz w.r.t. $\bm\tau$, i.e., 
    \begin{equation}
        \begin{split}
            |\ell(\mathcal{D}_{\tau}^{Q},\mathcal{D}_{\tau}^{S};\bm\theta)-\ell(\mathcal{D}_{\tau}^{Q},\mathcal{D}_{\tau}^{S};\bm\theta^{\prime})|\leq\beta_{\tau}||\bm\theta-\bm\theta^{\prime}||\\
            |\ell(\mathcal{D}_{\tau}^{Q},\mathcal{D}_{\tau}^{S};\bm\theta)-\ell(\mathcal{D}_{\tau^{\prime}}^{Q},\mathcal{D}_{\tau^{\prime}}^{S};\bm\theta)|\leq\beta_{\theta}||\bm\tau-\bm\tau^{\prime}||,
        \end{split}
    \end{equation}
where $\forall\{\bm\theta,\bm\theta^{\prime}\}\in\bm\Theta$ and $\forall\{\bm\tau,\bm\tau^{\prime}\}\in\mathcal{T}$.
\end{assumption}

\begin{assumption}[Bounded Functions]\label{assum_cvar_bound}
    Given arbitrary $\bm\theta\in\bm\Theta$ and the task $\tau\in\mathcal{T}$, the adaptation risk function $\ell(\cdot;\bm\theta)$ satisfies:
    \begin{equation}
        \max_{\tau\in\mathcal{T}}\ell(\mathcal{D}_{\tau}^{Q},\mathcal{D}_{\tau}^{S};\bm\theta)\leq\ell_{\text{max}}.
    \end{equation}
\end{assumption}

$\text{CVaR}_{\alpha}$ is a risk-averse measure in computational finance and management science \citep{linsmeier2000value,rockafellar2000optimization}, and we write it in the Definition \ref{def_cvar}.
As for Assumptions \ref{assum_cvar_lip} and \ref{assum_cvar_bound}, these are commonly seen in analysis \citep{lv2024theoretical,wang2024simple} and are also necessary to MPTS and PDTS.
These constitute the predictability of the adaptation risk value over iterations as the relative task difficulty remains invariant after the model's parameter is perturbed a bit.

\subsection{Details in Risk Predictive Modules}\label{subsec:risklearner}

As introduced in Section \ref{subsec:ratsandmpts}, a key feature of RATS is its use of risk predictive models, e.g., $p(\ell\vert\bm\tau,H_{1:t};\bm\theta_t)$, as surrogates for expensive evaluations.
To the best of our knowledge, MPTS \cite{wang2025modelpredictivetasksampling} is the first to develop such a model for robust optimization.
As introduced in Section \ref{subsec:ratsandmpts}, MPTS leverages generative modeling for adaptation optimization and employs approximate posterior inference to forecast adaptation risk values after one-step optimization.
Since small deviations in the machine learner's parameters do not alter the relative difficulty scores within the task batch, this provides a tractable approach to coarse-grained task difficulty evaluation.
The probabilistic graphical model is provided in Fig. \ref{fig:risklearnerpgm}.

Empirically, a series of evaluation on DR and Meta-RL in \citep{wang2025modelpredictivetasksampling} has validated the plausibility of such a schema.
Particularly, it achieves high Pearson correlation coefficient (PCC) values between the risk learner's predicted adaptation risk values, $\{\bar{\ell}_{t+1,i}:\approx\mathbb{E}_{q_{\bm\phi}(\bm z_{t}\vert H_{t})}[p_{\bm\psi}(\ell\vert\bm\tau_{t+1,i},H_{1:t})]\}_{i=1}^{\mathcal{B}}$ and the corresponding exact adaptation risk values $\{\ell_{t+1,i}\}_{i=1}^{\mathcal{B}}$, indicating the risk predictive model's ability to score difficulty.
The Pearson correlation coefficient value is computed as $\rho_{\bar{\ell},\ell}:=\frac{\sum_{i=1}^{\mathcal{B}}(\bar{\ell}_{t+1,i}-\text{Mean}[\{\bar{\ell}_{t+1,.}\}])(\ell_{t+1,i}-\text{Mean}[\{\ell_{t+1,.}\}])}{\sqrt{\sum_{i=1}^{\mathcal{B}}(\bar{\ell}_{t+1,i}-\text{Mean}[\{\bar{\ell}_{t+1,.}\}])^{2}}\sqrt{\sum_{i=1}^{\mathcal{B}}(\ell_{t+1,i}-\text{Mean}[\{\ell_{t+1,.}\}])^{2}}}$.

\begin{figure}[ht]
    \centering
    \includegraphics[width=0.65\linewidth]{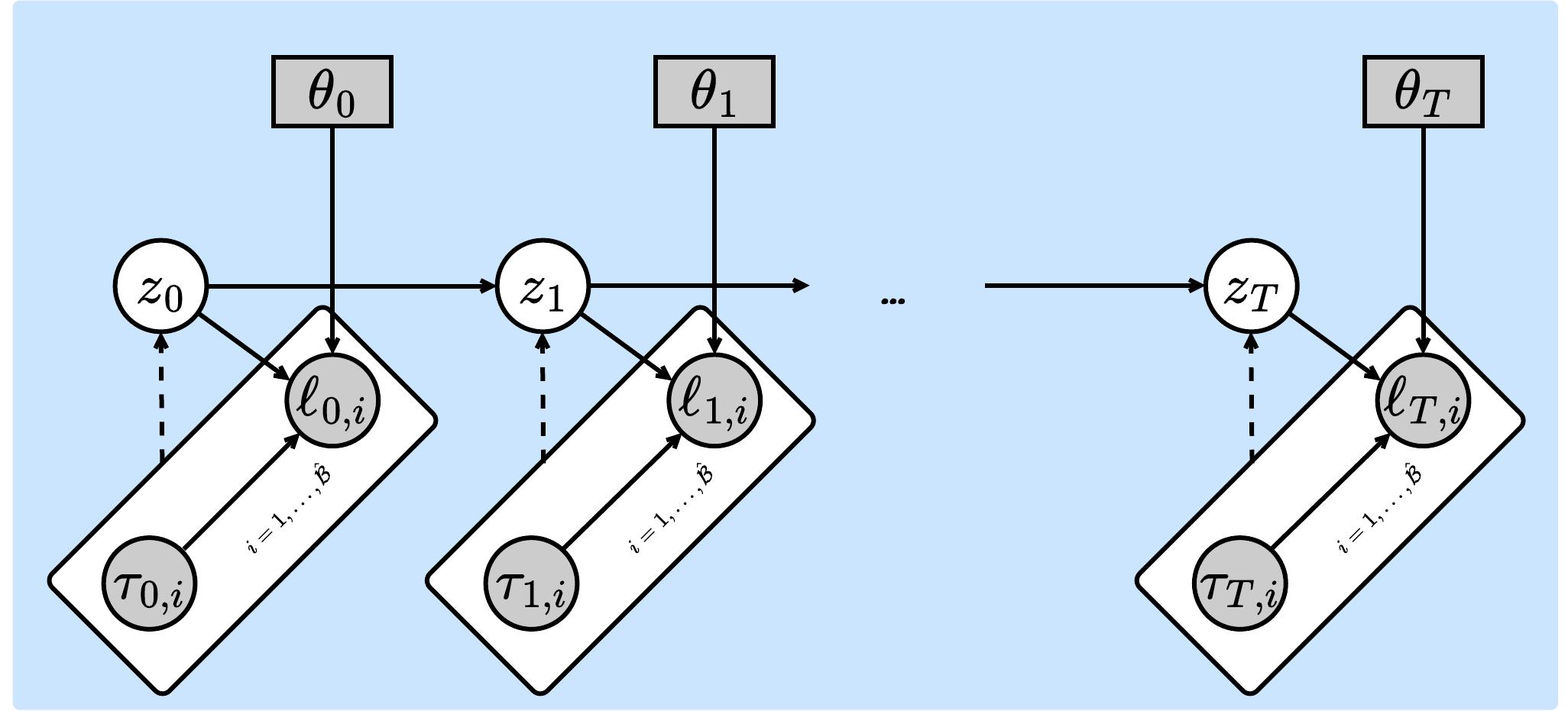}
    \vspace{-5pt}
    \caption{
The probabilistic graphical model of the risk predictive model in \citep{wang2025modelpredictivetasksampling}, where gray units denote observed variables with the white as unobservable ones. 
The solid directed lines depict the generative model, and the dashed directed lines indicate the recognition model and approximate inference \citep{kingma2013auto}.}
    \label{fig:risklearnerpgm}
\end{figure}

\textbf{Formulation of ELBO \& Stochastic Gradient Estimates}
For the sake of easy and fast understanding of PDTS for the reviewer, we include the risk predictive model part as follows.
These are modified from the MPTS team's open-sourced manuscript in \citep{wang2025modelpredictivetasksampling}.
Here, the optimization objective of the risk predictive model is in Eq. (\ref{eq_stream_vi_obj})
The risk predictive model uses a latent variable to summarize historical information and quantify uncertainty in predicting task-specific adaptation risk.
The following outlines the steps for deriving the evidence lower bound in optimization.
\begin{subequations}
    \begin{align}
        \mathcal{L}_{\text{ML}}(\bm\psi):=\ln p_{\bm\psi}(H_{t}\vert H_{1:t-1})
        =\ln\Big[\int p_{\bm\psi}(H_{t}\vert \bm z_{t})p(\bm z_{t}\vert H_{1:t-1})d\bm z\Big]
        \\
        =\ln\Big[\int q_{\bm\phi}(\bm z_{t}\vert H_{t})\frac{p(\bm z_{t}\vert H_{1:t-1})}{q_{\bm\phi}(\bm z_{t}\vert H_{t})} p_{\bm\psi}(H_{t}\vert \bm z_{t})d\bm z_{t}\Big]
        \\
        \geq
        \mathbb{E}_{q_{\bm\phi}(\bm z_{t}\vert H_{t})}\Big[\ln p_{\bm\psi}(H_{t}\vert \bm z_{t})\Big]
        -D_{KL}\Big[q_{\bm\phi}(\bm z_{t}\vert H_{t})\parallel p(\bm z_{t}\vert H_{1:t-1})\Big]
        :=\mathcal{G}_{\text{ELBO}}(\bm\psi,\bm\phi)
    \end{align}
\end{subequations}

Then, the ELBO is derived with the help of the reparameterization trick \citep{kingma2013auto}.
And \citet{wang2025modelpredictivetasksampling} further relax the ELBO to derive the $\beta$-VAE version, which corresponds to Eq. (\ref{eq_stream_vi_obj}) in this work.

\textbf{Neural Architecture of the Risk Predictive Model.}
For practicality, stability and fair comparison, we adopt the same risk predictive model design proposed in MPTS \citep{wang2025modelpredictivetasksampling}.
As shown in Fig.~\ref{fig:risklearner}, the risk learner follows an encoder-decoder structure.
For consistency across benchmarks, we employ the same neural architecture similar with that of neural processes \citep{garnelo2018neural,wang2023bridge} for all experiments.
The encoder consists of an embedding network with four hidden layers, each containing 10 units and using Rectified Linear Unit (ReLU) activations. 
It encodes the batch $\{[\bm\tau_{t,i},\ell_{t,i}]\}_{i=1}^\mathcal{B}$ into $\bm{r}$ through mean pooling, subsequently mapping it to $[\bm\mu,\bm\sigma]$.
The latent variable $\bm{z}$ is sampled from a normal distribution defined by $[\bm\mu,\bm\sigma]$.
The decoder is a three-layer neural network with nonlinear activation functions, mapping $\{[\bm\tau_{t,i},\bm{z}]\}_{i=1}^\mathcal{B}$ to the predicted risk $\{\hat{\ell}_{t,i}\}_{i=1}^\mathcal{B}$.
The optimization objective is given in Eq.~(\ref{eq_stream_vi_obj}).
For further details on the implementation, please refer to our code repository.

\begin{figure}[ht]
    \centering
    \includegraphics[width=0.9\linewidth]{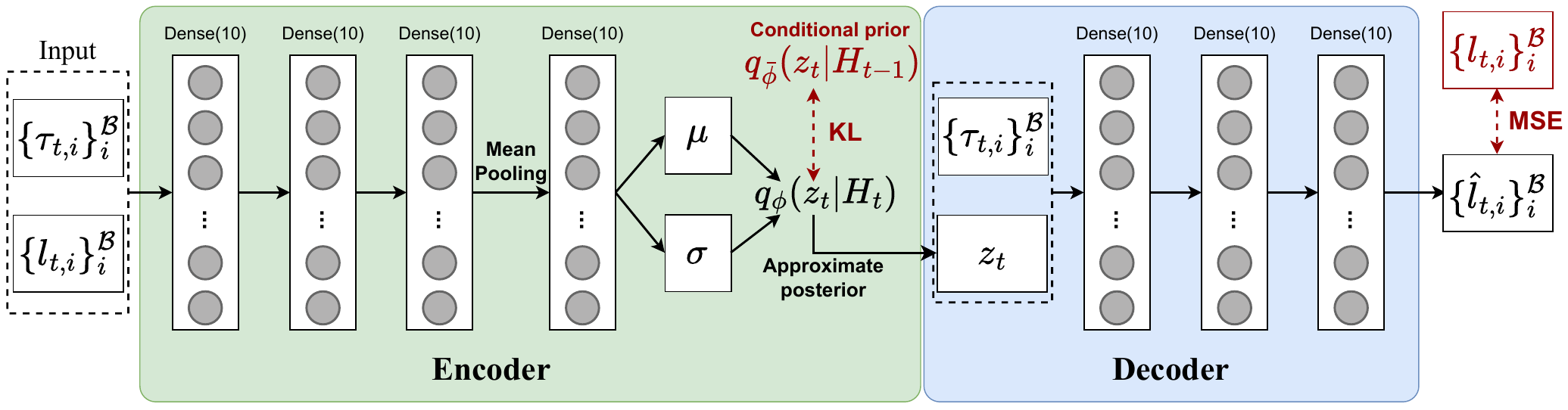}
    \vspace{-5pt}
    \caption{Illustration of the neural architecture of the risk predictive model in \citep{wang2025modelpredictivetasksampling}. The risk predictive model follows an encoder-decoder structure which encodes the batch ${[\bm\tau_{t,i},\ell_{t,i}]}$ into a latent variable $z$ and then decodes it into predicted adaptation risks $\hat{\ell}_{t,i}$.
    We reuse it in PDTS as a standard risk predictive model backbone.}
    \label{fig:risklearner}
\end{figure}

\section{Theoretical Analysis and Proofs}

\subsection{Preliminaries of MPTS}

In MPTS \citep{wang2025modelpredictivetasksampling}, the streaming adaptation outcome in task episodic learning is characterized as:
\begin{equation}
    \begin{split}
        \bm\theta_{0}\xlongrightarrow{\text{eval}}\cdots\xlongrightarrow{\text{eval}}\{(\bm\tau_{t-1,i},\ell_{t-1,i})\}_{i=1}^{\mathcal{B}}\xlongrightarrow{\text{opt}}\bm\theta_{t}\xlongrightarrow{\text{eval}}\{(\bm\tau_{t,i},\ell_{t,i})\}_{i=1}^{\mathcal{B}}\xlongrightarrow{\text{opt}}\cdots\xlongrightarrow{\text{opt}}\bm\theta_{T}.
    \end{split}
\end{equation}
The most encouraging finding is that these cumulated task identifiers and adaptation risk values can be coupled to learn helpful adaptation prior and serve the evaluation of the task difficulty in a rough granularity.
The resulting dataset is used to train the risk predictive model associated with the modified optimization objective as Eq. (\ref{eq_stream_vi_obj}).

\subsection{One Secret MDP Steers Adaptation to Many MDPs}

\textbf{The Operator for the State Transition in the Secret MDP.}
Here, given a smooth and fixed machine learning optimizer, we can define the operator in zero-shot or few-shot adaptation optimization as:
\begin{equation}\label{eq_state_op}
    \text{State Transition Operation after Adaptation}\quad\mathcal{F}:\bm\theta\times\mathcal{T}^{\mathcal{B}}\mapsto\bm\theta^{\prime},
\end{equation}
which corresponds to the deterministic transition function in the secret MDP $\mathcal{M}$.
Here, take the DR case as an example, the above operator $\mathcal{F}$ results in the following probablly transited states:
\begin{equation}
    \begin{split}
        \bm\theta_{t+1}^{*}=\arg\min_{\bm\theta\in\bm\Theta_{t+1}}\mathbb{E}_{p_{\alpha}(\tau;\bm\theta)}\Big[\ell(\mathcal{D}_{\tau}^{Q},\mathcal{D}_{\tau}^{S};\bm\theta)\Big],\\
        \bm\Theta_{t+1}=\Big\{\bm\theta_{t}-\eta\frac{1}{\mathcal{B}}\nabla_{\bm\theta}\sum_{b=1}^{\mathcal{B}}\ell(\mathcal{D}_{\tau_{b}}^{Q},\mathcal{D}_{\tau_{b}}^{S};\bm\theta)|\tau_{b}\in\mathcal{T}^{\mathcal{B}}_{t+1},|\mathcal{T}^{\mathcal{B}}_{t+1}|=\mathcal{B},\mathcal{T}^{\mathcal{B}}_{t+1}\subseteq\mathcal{T}^{\hat{\mathcal{B}}}_{t+1}\Big\},
    \end{split}
\end{equation}
where $p_{\alpha}(\tau;\bm\theta)$ corresponds to the probability density function of the $(1-\alpha)$-tailed tasks.

\textbf{Probabilistic Graphical Model of the Secret MDP.}
Here, we can write the probabilistic form of the constructed MDP as:
\begin{mdframed}[roundcorner=1pt, backgroundcolor=yellow!8,innerrightmargin=5pt]
 \begin{equation}\label{eq_mdp_pgm}
    \begin{split}
        p(\mathcal{T}_{1:T}^{\mathcal{B}},R_{1:T},\bm\theta_{0:T}\vert\Pi_{1:T},H_{0:T-1})
        =\underbrace{p(\bm\theta_{0})}_{\text{Initial State}}
        \prod_{t=1}^{T}\underbrace{p(R_{t}\vert\bm\theta_{t-1},\mathcal{T}_{t}^{\mathcal{B}})}_{\text{Step-Wise Reward}}
        \prod_{t=0}^{T-1}\underbrace{\pi_{t}(\mathcal{T}_{t+1}^{\mathcal{B}}\vert H_{0:t})}_{\text{Task Sampler}}
        \prod_{t=0}^{T-1}\underbrace{p(\bm\theta_{t+1}\vert\bm\theta_{t},\mathcal{T}_{t+1}^{\mathcal{B}})}_{\text{State Transition}}.
    \end{split}
\end{equation}
\end{mdframed}
where $\{\mathcal{T}_{0:T}^{\mathcal{B}},r_{0:T},\bm\theta_{0:T}\}$ records the trajectory information during policy search.
The corresponding probabilistic graphical model is in Fig. \ref{fig:secret_mdp}.

\subsection{Proof of Proposition \ref{prop_cvar_bandit}}

\textbf{Proposition} \ref{prop_cvar_bandit} (MPTS as a UCB-guided Solution to i-MABs)
\textit{Executing MPTS pipeline in Eq. (\ref{eq_mpts_workflows}) is equivalent to approximately solving $\mathcal{M}$ with the i-MAB under the UCB principle.}

To demonstrate the claim in \textbf{Proposition} \ref{prop_cvar_bandit}, we revisit fundamental concepts in $\text{CVaR}_{\alpha}$ optimization and break the problem into several steps.
These include \textbf{\# Step1} the dual form of $\text{CVaR}_{\alpha}$ with its Monte Carlo estimates, \textbf{Lemma} \ref{lemma_topk} to determine the optimal step-wise action, and \textbf{\# Step2} nearly $\text{CVaR}_{\alpha}$ approximation/UCB-Guided Solution to i-MABs in MPTS.

\textcolor{blue}{\textbf{\# Step1. The Greedy Action as the Subset with Top-$\mathcal{B}$ Adaptation Risk Values.}} 

\paragraph{Unbiased Monte Carlo Estimate of $\text{CVaR}_{\alpha}$.}
Here, we can rewrite the optimization objective of $\text{CVaR}_{\alpha}$ in the dual form:
\begin{equation}\label{eq_cvar_dual}
    \begin{split}
        \min_{\bm\theta\in\bm\Theta,\zeta\in\mathbb{R}}\text{CVaR}_{\alpha}(\bm\theta):=\zeta+\frac{1}{1-\alpha}\mathbb{E}_{p(\tau)}
        \Big[[\ell(\mathcal{D}_{\tau}^{Q},\mathcal{D}_{\tau}^{S};\bm\theta)-\zeta]^{+}
        \Big],
    \end{split}
\end{equation}
where the conditional function means $[\ell(\mathcal{D}_{\tau}^{Q},\mathcal{D}_{\tau}^{S};\bm\theta)-\zeta]^{+}=\max\{\ell(\mathcal{D}_{\tau}^{Q},\mathcal{D}_{\tau}^{S};\bm\theta)-\zeta]^{+},0\}$.
And its Monte Carlo sample average approximation corresponds to
\begin{equation}\label{eq_cvar_mc}
    \begin{split}
        \min_{\bm\theta\in\bm\Theta,\zeta\in\mathbb{R}}\text{CVaR}_{\alpha}(\bm\theta):=\zeta+\frac{1}{(1-\alpha)\mathcal{B}}\sum_{i=1}^{\mathcal{B}}[\ell(\mathcal{D}_{\tau}^{Q},\mathcal{D}_{\tau}^{S};\bm\theta)-\zeta]^{+}.
    \end{split}
\end{equation}
As the optimal auxiliary variable satisfies $\zeta=\text{VaR}_{\alpha}(\bm\theta)$, the Top-$\mathcal{B}$ element in the set $\{\ell_i\vert\ell_i=\ell(\mathcal{D}_{\hat{\tau}_{i}}^{Q},\mathcal{D}_{\hat{\tau}_{i}}^{S};\bm\theta)\}_{i=1}^{\hat{\mathcal{B}}}$ can be viewed as the unbiased Monte Carlo estimate w.r.t. Eq. (\ref{eq_cvar_dual}) and the equivalent form of Eq. (\ref{eq_cvar_mc}).
In implementation, \citep{wang2024simple,lv2024theoretical} typically evaluate the machine learner's adaptation performance on candidate tasks, rank their values, and filter out $(1-\alpha)\hat{\mathcal{B}}$ to optimize.

\begin{lemma}[Subset with Top-$\mathcal{B}$ Risk Values as the Optimal Arm]\label{lemma_topk}
    Given the secret MDP $\mathcal{M}$ in Section \ref{subsec:imabs} and its step-wsie reward $R(\bm\theta_{t},\mathcal{T}_{t+1}^{\mathcal{B}}):=\text{CVaR}_{\alpha}(\bm\theta_{t})-\text{CVaR}_{\alpha}(\bm\theta_{t+1})$, selecting the subset with average Top-$\mathcal{B}$ risk values inherently maximizes $R(\bm\theta_{t},\mathcal{T}_{t+1}^{\mathcal{B}})$. 
\end{lemma}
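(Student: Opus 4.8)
The plan is to exploit that the one-step reward $R(\bm\theta_t,\mathcal{T}_{t+1}^{\mathcal{B}})=\text{CVaR}_{\alpha}(\bm\theta_t)-\text{CVaR}_{\alpha}(\bm\theta_{t+1})$ has a first term that is constant in the action: the state $\bm\theta_t$ is fixed before subset selection. Hence maximizing $R$ over feasible arms is equivalent to minimizing $\text{CVaR}_{\alpha}(\bm\theta_{t+1})$, where $\bm\theta_{t+1}=\mathcal{F}(\bm\theta_t,\mathcal{T}_{t+1}^{\mathcal{B}})$ is the transited state produced by the deterministic operator of Eq.~(\ref{eq_state_op}). First I would substitute the explicit gradient form of this operator, $\bm\theta_{t+1}=\bm\theta_t-\tfrac{\eta}{\mathcal{B}}\sum_{\tau\in\mathcal{T}_{t+1}^{\mathcal{B}}}\nabla_{\bm\theta}\ell(\mathcal{D}_\tau^{Q},\mathcal{D}_\tau^{S};\bm\theta_t)=:\bm\theta_t-\eta\,\nabla_{\bm\theta}\bar{\ell}_{\mathcal{T}_{t+1}^{\mathcal{B}}}(\bm\theta_t)$, so that the chosen subset enters only through the averaged-risk gradient $\nabla_{\bm\theta}\bar{\ell}_{\mathcal{T}_{t+1}^{\mathcal{B}}}$ (the Meta-RL case is identical with the meta-gradient).

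Next I would expand $\text{CVaR}_{\alpha}$ to first order around $\bm\theta_t$. Using the $\beta_\tau$-Lipschitz regularity of Assumption~\ref{assum_cvar_lip} to control the remainder,
\begin{equation*}
\text{CVaR}_{\alpha}(\bm\theta_{t+1})=\text{CVaR}_{\alpha}(\bm\theta_t)-\eta\big\langle\nabla_{\bm\theta}\text{CVaR}_{\alpha}(\bm\theta_t),\ \nabla_{\bm\theta}\bar{\ell}_{\mathcal{T}_{t+1}^{\mathcal{B}}}(\bm\theta_t)\big\rangle+O(\eta^2).
\end{equation*}
To leading order in the step size the reward is thus governed by the alignment $\big\langle\nabla_{\bm\theta}\text{CVaR}_{\alpha},\ \nabla_{\bm\theta}\bar{\ell}_{\mathcal{T}_{t+1}^{\mathcal{B}}}\big\rangle$, and maximizing $R$ reduces to choosing the subset whose averaged-risk gradient best aligns with the steepest-descent direction $-\nabla_{\bm\theta}\text{CVaR}_{\alpha}$.

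The crux is then to recognize, via the \textbf{\# Step 1} dual form in Eq.~(\ref{eq_cvar_dual}), that the subgradient of $\text{CVaR}_{\alpha}$ at the optimal $\zeta=\text{VaR}_{\alpha}$ equals $\tfrac{1}{1-\alpha}\mathbb{E}_{p(\tau)}[\nabla_{\bm\theta}\ell\cdot\mathbf{1}(\ell\ge\text{VaR}_{\alpha})]$, whose sample-average estimate over the $\hat{\mathcal{B}}$ candidates is exactly the mean gradient over its $(1-\alpha)\hat{\mathcal{B}}=\mathcal{B}$ worst (Top-$\mathcal{B}$) tasks. Hence $\nabla_{\bm\theta}\bar{\ell}_{\mathcal{T}^{\mathcal{B}*}}$ for the Top-$\mathcal{B}$ subset coincides with the Monte Carlo estimate of $\nabla_{\bm\theta}\text{CVaR}_{\alpha}(\bm\theta_t)$; equivalently, a descent step on the Top-$\mathcal{B}$ averaged risk is literally a descent step on the $\text{CVaR}_{\alpha}$ objective itself. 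Substituting this back makes the alignment term equal $\|\nabla_{\bm\theta}\text{CVaR}_{\alpha}\|^2$, the maximal first-order decrease, so the Top-$\mathcal{B}$ arm maximizes $R$.

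I expect the main obstacle to be making the ``maximizes'' step fully rigorous across arms whose induced gradients differ in both magnitude and direction: a competing subset could in principle yield a larger inner product through a larger gradient norm, so a bare Cauchy--Schwarz bound is insufficient. I would resolve this by committing to the first-order (small-$\eta$) regime together with the exact identification above --- a step along the Top-$\mathcal{B}$ average is the genuine steepest-descent update for the very quantity $\text{CVaR}_{\alpha}$ being minimized --- and by invoking the correspondence $1-\alpha=\mathcal{B}/\hat{\mathcal{B}}$ so that the Top-$\mathcal{B}$ estimate is unbiased for the $(1-\alpha)$-tail. The Lipschitz and boundedness assumptions then bound the $O(\eta^2)$ curvature term uniformly over the finitely many arms, closing the argument.
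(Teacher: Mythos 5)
Your route is essentially the paper's: both proofs use the dual form of $\text{CVaR}_{\alpha}$ in Eq.~(\ref{eq_cvar_dual}) to identify the Top-$\mathcal{B}$ subset (with $1-\alpha=\mathcal{B}/\hat{\mathcal{B}}$) as the unbiased Monte Carlo estimate of the $(1-\alpha)$-tail, then run a first-order Taylor expansion of the one-step gradient update and conclude by an alignment argument; the paper's ``unbiased'' and ``biased'' objectives $\mathcal{L}$, $\hat{\mathcal{L}}$ in Eq.~(\ref{append_eq_taylor_exp}) play exactly the roles of your $\nabla_{\bm\theta}\text{CVaR}_{\alpha}$ and $\nabla_{\bm\theta}\bar{\ell}_{\mathcal{T}}$, with your indicator-based subgradient formula being a slightly more explicit rendering of the same identification.

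The one substantive difference is how the obstacle you correctly flag gets closed, and here your fix does not work as stated. The paper closes it by an explicit assumption: right after Eq.~(\ref{append_eq_taylor_exp}) it stipulates that the stochastic gradients of all candidate arms share the same norm, so only the angle $\alpha_q$ varies and $\cos\alpha_q\le 1$ yields the claim. Your substitute---``committing to the first-order regime'' plus the observation that the Top-$\mathcal{B}$ step is genuine steepest descent for $\text{CVaR}_{\alpha}$---does not rule out the problematic arms: an arm whose averaged gradient is $g_{\mathcal{T}'}=c\,\nabla_{\bm\theta}\text{CVaR}_{\alpha}(\bm\theta_t)$ with $c>1$ achieves first-order decrease $c\,\eta\|\nabla_{\bm\theta}\text{CVaR}_{\alpha}(\bm\theta_t)\|^2$, strictly larger than the Top-$\mathcal{B}$ decrease at \emph{every} step size, and nothing in Assumptions \ref{assum_cvar_lip}--\ref{assum_cvar_bound} ties gradient norms to risk values, so shrinking $\eta$ or bounding the $O(\eta^2)$ remainder uniformly over arms cannot exclude such an arm. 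To finish, you must either import the paper's equal-norm assumption explicitly or restate the optimality claim over normalized descent directions; with that addition, your argument and the paper's coincide.
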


\begin{proof}
    Given the state $\bm\theta_{t}$ and the available action set $\mathbf{A}_{t}$ in $\mathcal{M}$, we need to show that selecting the $\mathcal{T}_{t+1}^{\mathcal{B}*}\in\mathbf{A}_{t}$ brings largest CVaR decrease as $\text{CVaR}_{\alpha}(\bm\theta_{t})-\text{CVaR}_{\alpha}(\bm\theta_{t+1}^{*})$.

    Let $\mathcal{T}_{t+1}^{\mathcal{B}*}$ denote the subset with highest average Top-$\mathcal{B}$ risk values in the set $\{\ell_i\vert\ell_i=\ell(\mathcal{D}_{\hat{\tau}_{i}}^{Q},\mathcal{D}_{\hat{\tau}_{i}}^{S};\bm\theta)\}_{i=1}^{\hat{\mathcal{B}}}$, which can be viewed as the Monte Carlo sample, i.e., unbiased estimate of $p_{\alpha}(\tau;\bm\theta_{t})$.
    This is also in accordance with the dual form of $\text{CVaR}_{\alpha}$ in Eq. (\ref{eq_cvar_mc}).
    Meanwhile, we can define its transited state as $\bm\theta_{t+1}^{*}$.
    In a similar way, let some arbitrary feasible action, i.e., the subset be $\mathcal{T}_{t+1}^{\mathcal{B}^{\prime}}$.
    we can define the corresponding transited state as $\bm\theta_{t+1}^{*}$.
    Inspired by the notation in Eq. (\ref{def_cvar}), we can re-express the subset $\mathcal{T}_{t+1}^{\mathcal{B}^{\prime}}$ as the Monte Carlo sample from another task distribution $q(\tau)$, which differs from $p_{\alpha}(\tau;\bm\theta_{t})$.

    Next, we can estimate the robustness improvement under different actions from the one-step Taylor expansion trick.
    \begin{subequations}\label{append_eq_taylor_exp}
    \begin{align}
        \text{Unbiased Objective}\ \mathcal{L}(\bm\theta):=\mathbb{E}_{p_{\alpha}(\tau;\bm\theta_{t})}
        \left[\ell(\mathcal{D}_{\tau}^{Q},\mathcal{D}_{\tau}^{S};\bm\theta)\right]\quad\text{Biased Objective}\
        \hat{\mathcal{L}}(\bm\theta):=\mathbb{E}_{q(\tau)}
        \left[\ell(\mathcal{D}_{\tau}^{Q},\mathcal{D}_{\tau}^{S};\bm\theta)\right]
        \\
        \bm\theta^{\prime}_{t+1}=\bm\theta_{t}-\eta\nabla_{\bm\theta}\mathcal{L}(\bm\theta)\quad
        \hat{\bm\theta}^{\prime}_{t+1}=\bm\theta_{t}-\eta\nabla_{\bm\theta}\hat{\mathcal{L}}(\bm\theta)
        \\
        \mathcal{L}(\bm\theta^{\prime}_{t+1})=\mathcal{L}(\bm\theta_{t})-\eta\nabla_{\bm\theta}\mathcal{L}(\bm\theta)^{T}\nabla_{\bm\theta}\mathcal{L}(\bm\theta)+\mathcal{O}(||\bm\theta^{\prime}_{t+1}-\bm\theta_{t}||_{2}^{2})
        \Rightarrow
        \mathcal{L}(\bm\theta_{t})-\mathcal{L}(\bm\theta^{\prime}_{t+1})\approx\eta||\nabla_{\bm\theta}\mathcal{L}(\bm\theta)||_{2}^{2}\\
        \mathcal{L}(\bm\theta_{t})-\mathcal{L}(\hat{\bm\theta}^{\prime}_{t+1})\approx\eta\nabla_{\bm\theta}\mathcal{L}(\bm\theta)^{T}\nabla_{\bm\theta}\hat{\mathcal{L}}(\bm\theta)
        =\eta||\nabla_{\bm\theta}\mathcal{L}(\bm\theta)||_{2}^{2}\cos{\alpha_{q}}
        \leq\eta||\nabla_{\bm\theta}\mathcal{L}(\bm\theta)||_{2}^{2}
        =\mathcal{L}(\bm\theta_{t})-\mathcal{L}(\bm\theta^{\prime}_{t+1}),
    \end{align}
    \end{subequations}
    where we typically assume the norms of stochastic gradients for $\nabla_{\bm\theta}\mathcal{L}(\bm\theta)^{T}$ and $\nabla_{\bm\theta}\hat{\mathcal{L}}(\bm\theta)$ are the same and their angle is $\alpha_{q}$.
    As a result, we can see the optimal stochastic gradient should be the unbiased estimate of the Eq. (\ref{eq_cvar_dual}), which corresponds to the Top-$\mathcal{B}$ tasks in the pseudo batch $\mathcal{T}_{t+1}^{\hat{\mathcal{B}}}$ with $\frac{\mathcal{B}}{\hat{\mathcal{B}}}=1-\alpha$.
    This completes the proof of \textbf{Lemma} \ref{lemma_topk}.
\end{proof}

Meanwhile, remember that in the task-selection MDP $\mathcal{M}$, the agent will never revisit the previous state $\bm\theta_{t}$ due to the nature of the stochastic gradient descent in the operator $\mathcal{F}$ in Eq. (\ref{eq_state_op}).
Finally, we can claim that maximizing the state action value $Q(\bm\theta_{t},\mathcal{T}_{t+1}^{\mathcal{B}})$ in the i-MAB actually corresponds to maximizing the step-wise reward due to the Bellman optimality in the main paper, and picking up the worst subset secretly maximizes $R(\bm\theta_{t},\mathcal{T}_{t+1}^{\mathcal{B}})$ in a greedy way.
This implies that $\pi_{t}^{*}=\arg\max_{\mathcal{T}^{\mathcal{B}}_{t+1}\subseteq\mathcal{T}^{\hat{\mathcal{B}}}_{t+1}} Q(\bm\theta_{t},\mathcal{T}_{t+1}^{\mathcal{B}})$ and $Q(\bm\theta_{t},\mathcal{T}_{t+1}^{\mathcal{B}})\propto\frac{1}{\mathcal{B}}\sum_{i=1}^{\mathcal{B}}\ell_{t+1,i}$, where $\{\ell_{t+1,i}\}_{i=1}^{\mathcal{B}}$ denotes the evaluated adaptation performance of a feasible subset $\mathcal{T}_{t+1}^{\mathcal{B}}$ conditioned on $\bm\theta_t$.
In the presence of RATS, the adaptation performance is evaluated by the risk predictive model $p(\ell\vert\bm\tau,H_{1:t};\bm\theta_{t})$ in an amortized way, which suggests $Q(\bm\theta_{t},\mathcal{T}_{t+1}^{\mathcal{B}})\propto\frac{1}{\mathcal{B}}\sum_{i=1}^{\mathcal{B}}\hat{\ell}_{t+1,i}$ with $\hat{\ell}$ the predicted value.

\textcolor{blue}{\textbf{\# Step2. UCB-Guided Solution to i-MABs in MPTS.}} 

\begin{assumption}[Randomized Adaptation Risk Value Function]\label{assum_rand_value}
    \textit{Given the secret MDP $\mathcal{M}$ in Section \ref{sec_theory}, we assume the distribution of the adaptation risk value follows an implicit Gaussian distribution, i.e., $p(\ell_{t+1,i}\vert\bm\tau_i,H_{1:t};\bm\theta_{t})=\mathcal{N}(\mu_{t+1,i},\sigma_{t+1}^2)$.} 
\end{assumption}

Note that tasks with their identifiers are sampled in an \textit{i.i.d.} way; this induces the conditional independence and the distribution of their summation as Eq. (\ref{eq_risk_dist}) with the help of Assumption \ref{assum_rand_value}.
\begin{equation}\label{eq_risk_dist}
\begin{split}p(\mathcal{L}_{t+1}^{\mathcal{B}}\vert\mathcal{T}_{t+1}^{\mathcal{B}},H_{1:t};\bm\theta_t)=\prod_{i=1}^{\mathcal{B}}p(\ell_{t+1,i}\vert\bm\tau_i,H_{1:t};\bm\theta_{t})=\prod_{i=1}^{\mathcal{B}}\mathcal{N}(\mu_{t+1,i},\sigma_{t+1,i}^2)\\
\Longrightarrow
    p(\sum_{i=1}^{\mathcal{B}}\ell_{t+1,i}\vert\mathcal{T}_{t+1}^{\mathcal{B}};\bm\theta_t)
    =\mathcal{N}(\sum_{i=1}^{\mathcal{B}}\mu_{t+1,i},\sum_{i=1}^{\mathcal{B}}\sigma_{t+1,i}^2):=\mathcal{N}(\mu_{t+1}^{\mathcal{B}},\sigma_{t+1}^{\mathcal{B}^2})
    \end{split}
\end{equation}
At the same time, we find in MPTS, the multiple stochastic forward passes in Eq. (\ref{eq_acq}) are performed to obtain the MC estimated distribution parameters $\{m(\ell_i):=\hat{\mu}_{t+1,i},\sigma_{i}(\ell_i):=\hat{\sigma}_{t+1,i}\}$ for each task identifier $\bm\tau_i$ in the batch $\mathcal{T}_{t+1}^{\mathcal{B}}$.
This can be further associated with the factorization in the risk predictive model as Eq. (\ref{eq_fact_pred}):
\begin{subequations}\label{eq_fact_pred}
    \begin{align}
        p(\mathcal{L}_{t+1}^{\mathcal{B}}\vert\mathcal{T}_{t+1}^{\mathcal{B}},H_{1:t};\bm\theta_t)
        \approx\int p_{\bm\psi}(\mathcal{L}_{t+1}^{\mathcal{B}}\vert \mathcal{T}_{t+1}^{\mathcal{B}},\bm z_{t})q_{\bm\phi}(\bm z_{t}\vert H_{1:t})d\bm z_{t}\\
        =\int \prod_{i=1}^{\mathcal{B}}p(\ell_{t+1,i}\vert\bm{\tau}_{t+1,i},\bm z_{t})q(\bm z_{t}\vert H_{1:t})d\bm z_{t}.
    \end{align}
\end{subequations}
The last step shows that the worst subset corresponds to the unbiased Monte Carlo estimate of $\text{CVaR}_{\alpha}$, and its sample average adaptation risk value can be treated as the proxy of the reward.
Hence, picking up the subset with each element in the Top-$\mathcal{B}$ risk values is doing exploitation in robust fast adaptation.
Rethinking MPTS's acquisition function in Eq. (\ref{eq_acq}), we can find the following inequality:
\begin{subequations}
    \begin{align}
    \sqrt{\sum_{i=1}^{\mathcal{B}}\sigma_{t+1,i}^2}\leq\sum_{i=1}^{\mathcal{B}}\sigma_{t+1,i}\quad\text{with}\quad\forall \sigma_{t+1,i}\in\mathbb{R}^{+}\\
        \Longrightarrow\gamma_1\sqrt{\sum_{i=1}^{\mathcal{B}}\sigma_{t+1,i}^2}+\sum_{i=1}^{\mathcal{B}}\gamma_0 \mu_{t+1,i}\leq\sum_{i=1}^{\mathcal{B}}\gamma_1\sigma_{t+1,i}+\gamma_0\mu_{t+1,i}\\
        \Longrightarrow\underbrace{\gamma_1\sqrt{\sum_{i=1}^{\mathcal{B}}\sigma(\ell_i)^2}+\gamma_0\sum_{i=1}^{\mathcal{B}} m(\ell_i)}_{\text{UCB}}\leq\underbrace{\sum_{i=1}^{\mathcal{B}}\gamma_1\sigma(\ell_i)+\gamma_0 m(\ell_i)}_{\text{Approximate UCB}}:=\mathcal{A}_{\text{U}}(\mathcal{T}^{\mathcal{B}}),
    \end{align}
\end{subequations}
which means MPTS actually executes the approximate UCB to balance the exploitation (picking up the subset with the estimated worst performance) and the exploration of the task space (picking up the arm with the nearly highest epistemic uncertainty \citep{wang2020doubly} captured by the risk predictive model).

The above two steps complete the proof of \textbf{Proposition} \ref{prop_cvar_bandit}.

\subsection{Proof of Proposition~\ref{prop:concentration}}

\textbf{Proposition} \ref{prop:concentration} (Concentration Issue in Average Top-$\mathcal{B}$ Selection)
\textit{Let $f(\bm{\tau}): \mathbb{R}^d\to\mathbb{R}$ be a unimodal and continuous function, where $d\in\mathbb{N}^+$ and $\bm{\tau}\in\mathbb{R}^d$, with a maximum value $f(\bm{\tau}^*)$ at $\bm{\tau}^*$.
We uniformly sample a set of points $ \mathcal{T}^{\hat{\mathcal{B}}} = \{\bm{\tau}_i\}_{i=1}^{\hat{\mathcal{B}}}$, 
where $\bm{\tau_i}$ are i.i.d. with a probability $p_\epsilon$ of falling within a $\epsilon$-neighborhood of $\bm{\tau^*}$ as $|f(\bm{\tau}) - f(\bm{\tau}^*)|\leq \epsilon$. 
Following MPTS, we select the Top-$\mathcal{B}$ samples with the largest function values, i.e., 
$$
\mathcal{T}^{\mathcal{B}} = \text{Top-}\mathcal{B}(\mathcal{T}^{\hat{\mathcal{B}}}, f), \quad \hat{\mathcal{B}}, \mathcal{B} \in \mathbb{N}^+, \; \mathcal{B} \leq \hat{\mathcal{B}},$$
For any $\epsilon>0$ such that $p_\epsilon<\frac{\hat{\mathcal{B}}-\mathcal{B}+2}{\hat{\mathcal{B}}+1}$, the \textit{concentration probability}
$$\mathbb{P}\left( |f(\bm{\tau}) - f(\bm{\tau}^*)| \leq \epsilon \ \vert\  \forall \bm{\tau} \in \mathcal{T}^{\mathcal{B}} \right)$$
increases with $\hat{\mathcal{B}}$ and converges to 1 with $\hat{\mathcal{B}} \to \infty$.}

\begin{proof}
We define \( p_\epsilon \) as the probability that a random variable \( \bm{\tau} \) uniformly sampled from the domain of definition falls within the neighborhood of the maximum value \( \bm{\tau}^* \), i.e.,
\[
p_\epsilon = \mathbb{P}\left( |f(\bm{\tau}) - f(\bm{\tau}^*)| \leq \epsilon \ \middle| \ \bm{\tau} \sim \text{Unif}(\cdot) \right).
\]

Next, consider the probability that at least \( \mathcal{B} \) random variables from the set \( \mathcal{T}^{\hat{\mathcal{B}}} = \{ \bm{\tau}_i \}_{i=1}^{\hat{\mathcal{B}}} \), where \( {\hat{\mathcal{B}}}, \mathcal{B} \in \mathbb{N}^+ \) and \( \mathcal{B} \leq {\hat{\mathcal{B}}} \), are within the neighborhood of \( \bm{\tau}^* \). Since the \( \bm{\tau}_i \)'s are i.i.d. and \( \bm{\tau}_i \sim \text{Unif}(\cdot) \), the probability is given by
\[
P^{{\hat{\mathcal{B}}},\mathcal{B}} = 1 - \left[ \sum_{i=1}^\mathcal{B} p_\epsilon^{{\hat{\mathcal{B}}}-i+1} (1 - p_\epsilon)^{i-1} \binom{{\hat{\mathcal{B}}}}{i-1} \right].
\]

Since \( f \) is a unimodal and continuous function, we can directly relate the concentration probability as
\[
\mathbb{P}\left( |f(\bm{\tau}) - f(\bm{\tau}^*)| \leq \epsilon \ \middle| \ \forall \bm{\tau} \in \mathcal{T}^{\mathcal{B}} \right) = P^{{\hat{\mathcal{B}}},\mathcal{B}}.
\]

To establish the monotonicity of \( P^{{\hat{\mathcal{B}}},\mathcal{B}} \) with respect to \( {\hat{\mathcal{B}}} \), observe that the term \( p_\epsilon^{{\hat{\mathcal{B}}}-i+1}(1 - p_\epsilon)^{i-1} \binom{{\hat{\mathcal{B}}}}{i-1} \) is monotonically decreasing in \( {\hat{\mathcal{B}}} \) for fixed \( i \), given that \( p_\epsilon < \frac{{\hat{\mathcal{B}}}-i+2}{{\hat{\mathcal{B}}}+1} \). To see this, we compute the ratio of consecutive terms:
\[
\frac{p_\epsilon^{{\hat{\mathcal{B}}}-i+1}(1 - p_\epsilon)^{i-1} \binom{{\hat{\mathcal{B}}}}{i-1}}{p_\epsilon^{{\hat{\mathcal{B}}}-i+2}(1 - p_\epsilon)^{i-1} \binom{{\hat{\mathcal{B}}}+1}{i-1}} = \frac{{\hat{\mathcal{B}}}-i+2}{p_\epsilon({\hat{\mathcal{B}}}+1)} > 1.
\]
Thus, the sequence \( p_\epsilon^{{\hat{\mathcal{B}}}-i+1}(1 - p_\epsilon)^{i-1} \binom{{\hat{\mathcal{B}}}}{i-1} \) decreases with \( {\hat{\mathcal{B}}} \), and consequently, \( P^{{\hat{\mathcal{B}}},\mathcal{B}} \) increases monotonically in \( {\hat{\mathcal{B}}} \) when \( \mathcal{B} \) is fixed, provided that \( p_\epsilon < \frac{{\hat{\mathcal{B}}}-\mathcal{B}+2}{n+1} \).

Therefore, we conclude that for any \( \epsilon > 0 \) such that \( p_\epsilon < \frac{{\hat{\mathcal{B}}}-\mathcal{B}+2}{{\hat{\mathcal{B}}}+1} \), the probability \( \mathbb{P}\left( |f(\bm{\tau}) - f(\bm{\tau}^*)| \leq \epsilon \ \middle| \ \forall \bm{\tau} \in \mathcal{T}^{\mathcal{B}} \right) \) increases monotonically with \( {\hat{\mathcal{B}}} \) for fixed \( \mathcal{B} \).

\end{proof}

\subsection{Proof of \textbf{Proposition} \ref{prop_worst_case_concept}}

\textbf{Proposition} \ref{prop_worst_case_concept} (Nearly Worst-Case Optimization with PDTS)
\textit{When $\hat{\mathcal{B}}$ grows large enough, optimizing the subset from Eq. (\ref{eq_diversity_max}) achieves nearly worst-case optimization.}

\begin{proof}
    As the exact size of the subset $\mathcal{B}$ is fixed in the optimization, the ratio $\frac{\mathcal{B}}{\hat{\mathcal{B}}}$ goes to nearly 0 with the increase of $\hat{\mathcal{B}}$ to a certain scale.
    As the consequence, the number of arms grows to $C_{\hat{\mathcal{B}}}^{\mathcal{B}}$ and the robustness concept is $\text{CVaR}_{1-\frac{\mathcal{B}}{\hat{\mathcal{B}}}}$.
    Since the involvement of the diversity regularization perturbs the worst arm selection, this induces the nearly worst-case optimization in PDTS.
\end{proof}

\section{Experimental Setups \& Implementation Details}

\subsection{Risk-Averse Baseline Details}

These baselines are SOTA methods published in NeurIPS/ICLR conferences and the latest open-sourced version \citep{wang2024simple,lv2024theoretical,sagawa2019distributionally,wang2025modelpredictivetasksampling,greenberg2024train}.

\paragraph{GDRM \citep{sagawa2019distributionally,setlurprompting}.}
As briefly introduced in the main paper, GDRM can be viewed as a min-max optimization problem. 
The core concept of enhancing the machine learner's robustness involves reallocating more probability mass to the worst-case scenarios in a weighted manner. 
In each iteration with the optimal \( p_{\hat{g}}(\tau) \), the optimization problem simplifies to:
\begin{equation}
    \begin{split}
        \min_{\bm\theta\in\bm\Theta}\mathbb{E}_{p_{\hat{g}}(\tau)}
        \Big[\ell(\mathcal{D}_{\tau}^{Q},\mathcal{D}_{\tau}^{S};\bm\theta)
        \Big]
        =\mathbb{E}_{p(\tau)}\left[\frac{p_{\hat{g}}(\tau)}{p(\tau)}\ell(\mathcal{D}_{\tau}^{Q},\mathcal{D}_{\tau}^{S};\bm\theta)\right],
    \end{split}
\end{equation}
where we use $\omega(\tau)=\frac{p_{\hat{g}}(\tau)}{p(\tau)}$ to denote the weight.

In general, for a fixed number of tasks, GDRM organizes tasks heuristically or dynamically into clusters, followed by a reweighting mechanism based on assessed risks. However, in task-episodic learning, no task grouping is performed because the task batch is reset after each iteration \citep{wang2024simple,lv2024theoretical}. Task-specific weights are calculated as $\omega(\tau_{i})=\frac{\exp(\eta\ell(\mathcal{D}_{\tau_{i}}^{Q},\mathcal{D}_{\tau_{i}}^{S};\bm\theta))}{\sum_{b=1}^{\mathcal{B}}\exp(\eta\ell(\mathcal{D}_{\tau_{b}}^{Q},\mathcal{D}_{\tau_{b}}^{S};\bm\theta))}$, where $\eta$ denotes the temperature parameter, and $\{\bm\tau_{b}\}_{b=1}^{\mathcal{B}}$ represents the identifiers of the task batch.
Additional implementation details are available at \url{https://github.com/kohpangwei/group_DRO}.

\paragraph{DRM \citep{wang2024simple,lv2024theoretical}.}
As outlined in the main paper, we adopt the standard approach in DRM, where the optimization objective is expressed as $\mathbb{E}_{p_{\alpha}(\tau;\bm\theta)}\Big[\ell(\mathcal{D}_{\tau}^{Q},\mathcal{D}_{\tau}^{S};\bm\theta)\Big]$. 
A widely used practical strategy involves evaluating the performance of the machine learner, ranking task-specific adaptation risks, and optimizing over the worst $(1-\alpha)$ proportion of tasks.

Following the setup in \citet{wang2024simple}, we select the Top-$\mathcal{B}$ tasks during optimization, which corresponds to evaluating a task batch of size $\frac{\mathcal{B}}{1-\alpha}$. To ensure a fair comparison with PDTS while preserving computational efficiency, we employ the same Monte Carlo estimator for the risk quantile as in \citet{wang2024simple}. For all benchmarks, we set the actual task batch size to $\hat{\mathcal{B}} = 2\mathcal{B}$, discarding the easiest half before optimizing the machine learner.

\paragraph{MPTS \citep{wang2025modelpredictivetasksampling}.}
We have thoroughly introduced the MPTS pipeline in Section 2.2 and provided details of the risk learner in Section \ref{subsec:risklearner}.
For additional configurations, we adopt those from the official repository at \url{https://github.com/thu-rllab/MPTS}, including the heuristic random mixture strategy and the specific values of $\hat{\mathcal{B}}$.

\begin{table*}[h!]
   \begin{center}
    \caption{\textbf{Details of Task Identifiers and Algorithm Backbones Across Benchmarks.}
    Here, we provide detailed information about the task identifiers used to induce task distributions and the algorithm backbones, including MAML~\cite{finn2017model}, TD3~\cite{fujimoto2018addressing}, and PPO~\cite{schulman2017proximal}, employed in various benchmarks.
    }
    \label{table_bench_explicit_identifier}
    \resizebox{\linewidth}{!}{
    \begin{tabular}{|c|c|c|c|}
      \toprule 
       Benchmarks & Identifier Meaning & Identifier Range&Backbone\\
      \midrule 
      \texttt{K-shot} sinusoid regression & amplitude and phase $(a,b)$ & $[0.1,5.0]\times[0,\pi]$&MAML \\
      \midrule 
      Meta-RL: ReacherPos & goal location $(x_1,x_2)$ & $[-0.2,0.2]\times[-0.2,0.2]$&\multirow{4}{*}{MAML} \\
      Meta-RL: Walker2dVel & velocity $v$ & $[0,2.0]$ &\\
      Meta-RL: Walker2dMassVel & mass and velocity $(m,v)$ & $[0.75,1.25]\times[0,2.0]$& \\
      Meta-RL: HalfCheetahMassVel & mass and velocity $(m,v)$ & $[0.75,1.25]\times[0,2.0]$& \\
      \midrule 
      DR: Pusher & puck friction loss $f$ and puck joint damping $d$ & $[0.004, 0.01]\times[0.01,0.025]$&\multirow{3}{*}{TD3}\\
      DR: LunarLander & main engine strength $s$ & $[4,20]$& \\
      DR: ErgoReacher & joint damping $d$ and max torque $t$ ($\times$4 joints) & $[0.1,2.0]\times[2,20]$& \\
      \midrule
      VisualDR: LiftPegUpright\_Light & ambient light $l$ (x3 dimensions)& $[-1.0,2.0]$&\multirow{2}{*}{PPO}\\
      VisualDR: AnymalCReach\_Goal & goal location $(x_1,x_2)$&$[0.0,1.0]\times[0.0,1.0]$&\\
      \bottomrule 
    \end{tabular}
    }
  \end{center}
\end{table*}

\subsection{Sinusoid Regression}

Following standard setups in prior works~\cite{finn2017model, wang2025modelpredictivetasksampling}, we define the sinusoid regression problem as a toy example of supervised meta-learning.
The goal of this problem is to predict the wave function $y = a\sin(x - b)$, where the amplitude $a$ and phase $b$ are sampled as task-specific parameters, i.e., task identifier $\bm\tau=(a,b)$, using a few-shot support dataset.
We adopt the 10-\texttt{shot} sinusoid regression setting, where 10 data points are uniformly sampled from the interval $[-5.0, 5.0]$ to form the support dataset for each task.

\paragraph{Implementation Details.}
The machine learner is a neural network with 2 hidden layers, each of size 40, using the Rectified Linear Unit (ReLU) as the nonlinear activation function.
The task batch size is set to 16 for ERM and GDRM, while a batch size of 32 is used as the default for DRM.
The temperature parameter in GDRM is $\eta = 0.001$, and the learning rates for both the inner and outer loops are fixed at $0.001$.
The task identifier has a dimensionality of 2.
For MPTS and PDTS, the identifier batch size during training is set to 32 ($2\times$) and 512 ($32\times$), respectively.
We use the Adam optimizer with a learning rate of $5 \times 10^{-4}$ to update the risk learner over 15,000 steps.
The label for the risk learner is the average MSE loss value for each task.
For sinusoid regression and meta-reinforcement learning, we use the standard repository provided by MAML~\citep{finn2017model}.
Regarding validation during training, we use a separate uniform task sampler with a fixed random seed to select 1,000 tasks for validating the training checkpoints of various methods.

\subsection{Meta Reinforcement Learning}

We adopt four Meta-RL scenarios—ReacherPos, Walker2dVel, Walker2dMassVel, and HalfCheetahMassVel—from MPTS~\cite{wang2025modelpredictivetasksampling}, which represent distinct MDP distributions based on the Mujoco physics engine~\citep{todorov2012mujoco}.
These scenarios involve three types of robots (HalfCheetah, Walker2d, and Reacher) and three randomized meta-learning objectives: velocity, goal location, and body mass.
\begin{itemize}
    \item The objective in velocity-based scenarios, e.g., Walker2dVel, is to train the robot to achieve a target velocity, with the reward function defined as the negative absolute difference between the robot's current velocity and the target velocity. This reward is augmented by a control penalty and an alive bonus to facilitate learning. As the task distribution is specified by a uniform distribution over the target velocity, $\bm\tau=v$ can be viewed as the task identifier.
    \item The body mass and velocity scenarios, e.g., Walker2dMassVel and HalfCheetahMassVel, share the same objective as the velocity scenarios but additionally feature varying robot masses. As the task distribution is specified by a uniform distribution over the body mass and target velocity, $\bm\tau=(m,v)$ can be viewed as the task identifier.
    \item The goal location scenario, e.g., ReacherPos, requires moving a two-jointed robot arm's end effector close to a target position. Its reward function is defined as the negative $L_1$ distance between the end effector's position and the target, supplemented by a control cost to encourage robustness. As the task distribution is specified by a uniform distribution over the goal location, $\bm\tau=(x_1,x_2)$ can be viewed as the task identifier.
    
\end{itemize}

\begin{figure}
    \centering
    \includegraphics[width=0.7\linewidth]{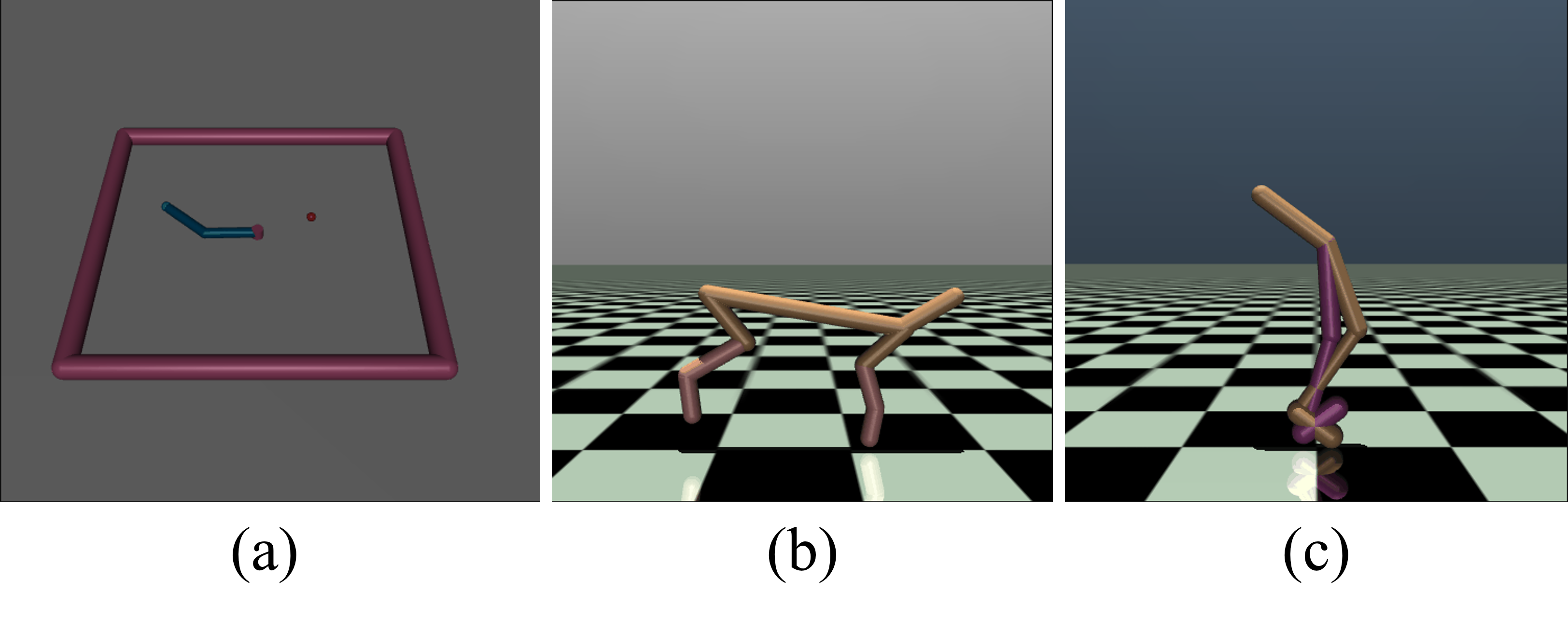}
    \vspace{-15pt}
    \caption{Illustrations of three types of robots in Meta-RL based on the Mujoco. (a) Reacher, (b) HalfCheetah, and (c) Walker2d.}
    \label{fig:metarlrobot}
\end{figure}

\paragraph{Implementation Details.}
The machine learner is implemented as a neural network with 2 hidden layers, each consisting of 64 units, and utilizes ReLU activations for nonlinearity. 
For ERM and GDRM, the default task batch size is set to 20, while DRM uses a batch size of 40.
The temperature parameter for GDRM is configured as 0.001.  
Both the inner and outer loop learning rates are fixed at 0.1.
Besides, the identifier batch size during training is 30 (\(1.5\times\)) for MPTS and 1280 (\(64\times\)) for PDTS.
The risk learner is updated using the Adam optimizer with a learning rate of $5 \times 10^{-3}$.
The label for the risk learner is the negative average reward value at the final step for each task.
For validation during training, we uniformly sample 40 tasks from the task space at fixed intervals to validate the training checkpoints of different methods.
For meta-testing after training, we uniformly sample 100 tasks from the task space to test the trained models of different methods.

\subsection{Physical Robotics Domain Randomization}

\begin{figure}
    \centering
    \includegraphics[width=0.7\linewidth]{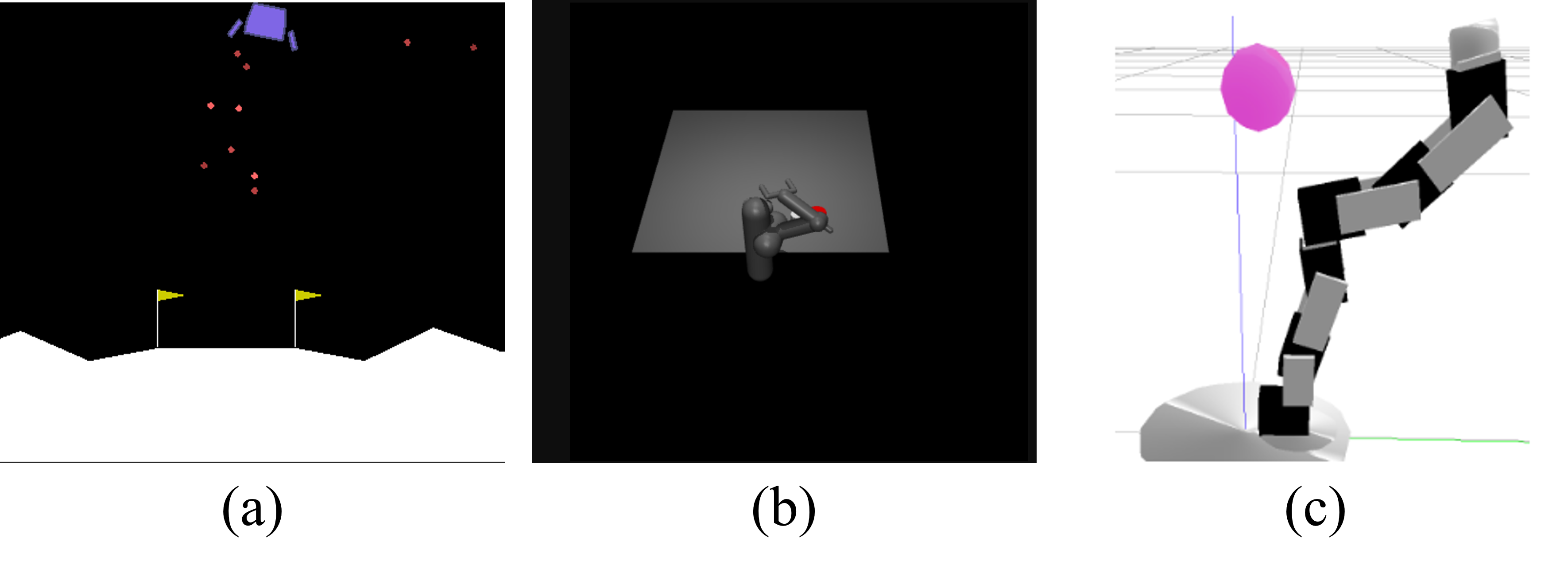}
    \vspace{-15pt}
    \caption{Illustrations of three physical robotics domain randomization scenarios: (a) LunarLander, (b) Pusher, and (c) ErgoReacher. The illustration of ErgoReacher is adapted from \citep{mehta2020active}.}
    \label{fig:drtask}
\end{figure}

As shown in Fig.~\ref{fig:drtask}, we adopt three scenarios for robotics domain randomization from \citet{mehta2020active}: Pusher, LunarLander, and ErgoReacher. The task distribution is defined as in \citep{wang2025modelpredictivetasksampling}.
Specifically:
\begin{itemize}
    \item LunarLander is a 2-degree-of-freedom (DoF) environment where the agent must softly land a spacecraft, implemented in Box2D \citep{box2d}. Its reward function provides positive rewards for successful landings, negative rewards for crashes, and penalties for fuel consumption and deviations from the landing pad, thereby promoting efficient and controlled landings. The task distribution is specified by a uniform distribution over the main engine strength $s$, which can be viewed as the task identifier $\bm\tau=s$.
    \item Pusher is a 3-DoF robotic arm control environment based on MuJoCo~\citep{todorov2012mujoco}, where the agent pushes a puck to a target. The reward function penalizes the $L_2$ distance between the puck and the target, augmented by a control penalty. The task distribution is specified by a uniform distribution over the puck friction loss $f$ and puck joint damping $d$, which can be viewed as the task identifier $\bm\tau=(f,d)$.
    \item ErgoReacher involves a 4-DoF robotic arm implemented in the Bullet Physics Engine~\citep{coumans2015bullet}, tasked with reaching a goal using its end effector. Its reward function penalizes the distance between the end effector and the target, combined with control penalties.
    The task distribution is specified by a uniform distribution over the joint damping $d$ and max torque $t$ across 4 joints, which can be viewed as the task identifier $\bm\tau=(d_1,d_2,d_3,d_4,t_1,t_2,t_3,t_4)$.
\end{itemize}

Details of the randomized task identifier range are presented in Table~\ref{table_bench_explicit_identifier}.

\paragraph{Implementation Details.}
The machine learner is a neural network with two hidden layers, each consisting of 10 units, and uses ReLU activation functions.  
For ERM and GDRM, the task batch size is set to 10, while DRM uses a batch size of 20.
GDRM employs a temperature parameter of 0.01. 
We adopt TD3 algorithm~\cite{fujimoto2018addressing} as the algorithm backbone.
The actor and critic learning rates are both set to \(3 \times 10^{-4}\).
For MPTS, the identifier batch size during training is 25 (\(2.5 \times\)) for LunarLander, 50 (\(5 \times\)), and 250 (\(25 \times\)) for ErgoReacher.  
In contrast, for PDTS, the identifier batch size during training is 640 (\(64 \times\)) for all environments, with no additional requirements for fine-tuning.  
The risk learner is updated using the Adam optimizer with a learning rate of 0.005.
The label for the risk learner is the negative average return for each task.
For validation during training, we uniformly sample 100 tasks from the task space to validate the training checkpoints of different methods.

\subsection{Visual Robotics Domain Randomization}
\begin{figure}
    \centering
    \includegraphics[width=0.7\linewidth]{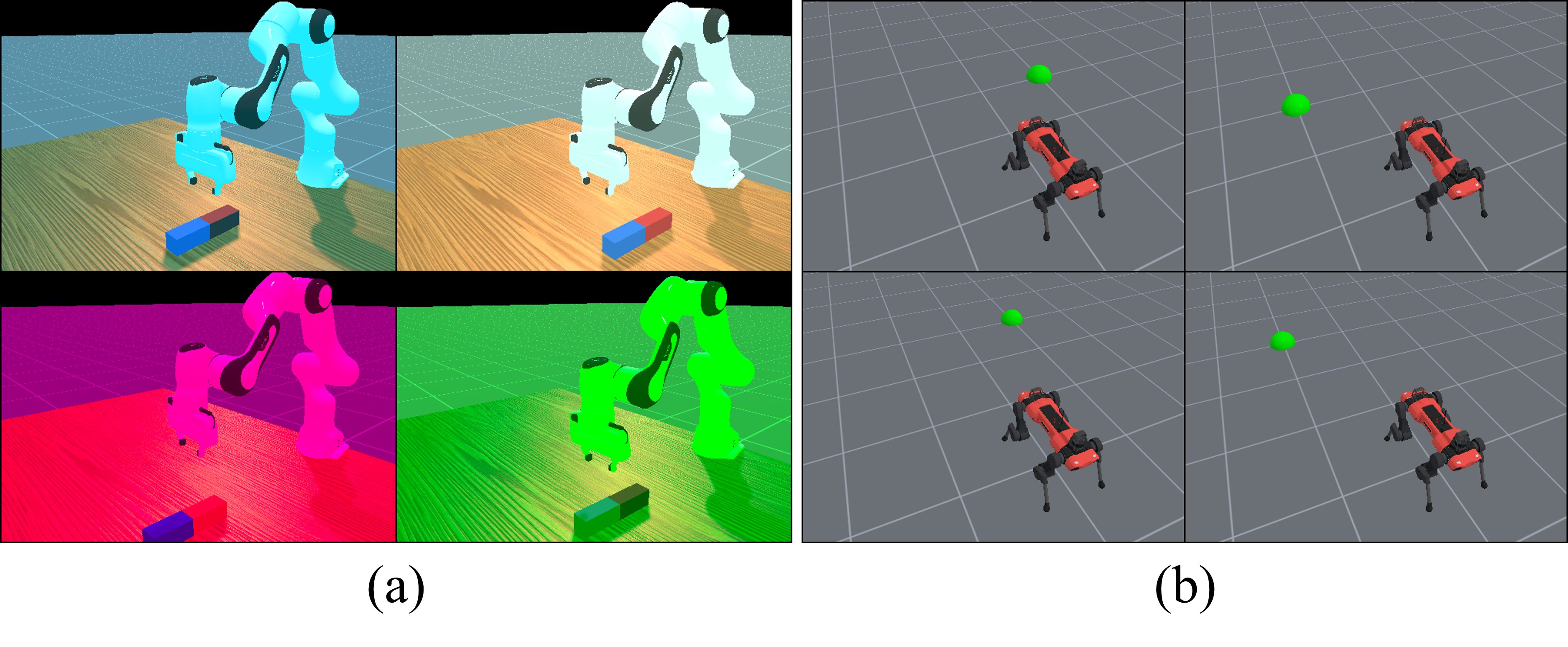}
    \vspace{-15pt}
    \caption{Illustrations of two visual robotics domain randomization scenarios: controlling (a) a table-top robotic arm and (b) a quadruped robot, operating under randomized lighting conditions and varying goal locations, respectively.}
    \label{fig:vdrtask}
\end{figure}
Visual-based robotics control is common and crucial in real-world scenarios. 
As illustrated in Fig.~\ref{fig:vdrtask}, based on the latest robotics simulator, ManiSkill3~\cite{tao2024maniskill3}, we design two scenarios for visual robotics domain randomization: LiftPegUpright\_Light and AnymalCReach\_Goal. These scenarios involve controlling a tabletop two-finger gripper arm robot and a quadruped robot, respectively, under randomized lighting conditions and goal locations.

Specifically: 
\begin{itemize} 
\item LiftPegUpright\_Light is derived from the LiftPegUpright-v1 scenario in ManiSkill3, where the objective is to move a peg lying on the table to an upright position. To emulate the complex lighting conditions found in real-world environments, we modify this scenario to make the ambient lighting controllable. An illustration of this setup is shown in Fig.~\ref{fig:vdr}. The task distribution is specified by a uniform distribution over the configurations of ambient light, which can be viewed as the task identifier $\bm\tau=(l_1,l_2,l_3)$.
\item AnymalCReach\_Goal is adapted from the AnymalC-Reach-v1 scenario in ManiSkill3. The task is to control the AnymalC robot to reach a target location in front of it. Inspired by point-robot navigation scenarios in prior works\cite{finn2017model}, we randomize the goal location, which remains visible to the robot. The task distribution is specified by a uniform distribution over the goal location, which can be viewed as the task identifier $\bm\tau=(x_1,x_2)$.
\end{itemize}

The detailed randomization configurations are provided in Table~\ref{table_bench_explicit_identifier}.

\paragraph{Implementation Details.}
We use the PPO algorithm~\cite{schulman2017proximal}, along with its hyperparameters and network architecture, as provided in the official ManiSkill3 codebase. 
To accommodate GPU memory limitations, we set the task batch size to 16 for ERM and GDRM and 32 for DRM, with 500 steps per iteration on LiftPegUpright\_Light. 
For AnymalCReach\_Goal, the task batch size is 128 for ERM and GDRM and 256 for DRM, with 200 steps per iteration.
For MPTS, the identifier batch size during training is 32 for LiftPegUpright\_Light and 256 for AnymalCReach\_Goal (\(2.5 \times\)). 
For PDTS, the identifier batch size during training is \(64 \times\) the default for all environments.
The risk learner is updated using the Adam optimizer with a learning rate of 0.005.
The label for the risk learner is the negative average reward for each task.
For validation during training, we uniformly sample 64 tasks from the task space to validate the training checkpoints of different methods.

\section{Additional Experiment Results}

\begin{figure}[htbp]
    \centering
    \includegraphics[width=0.65\linewidth]{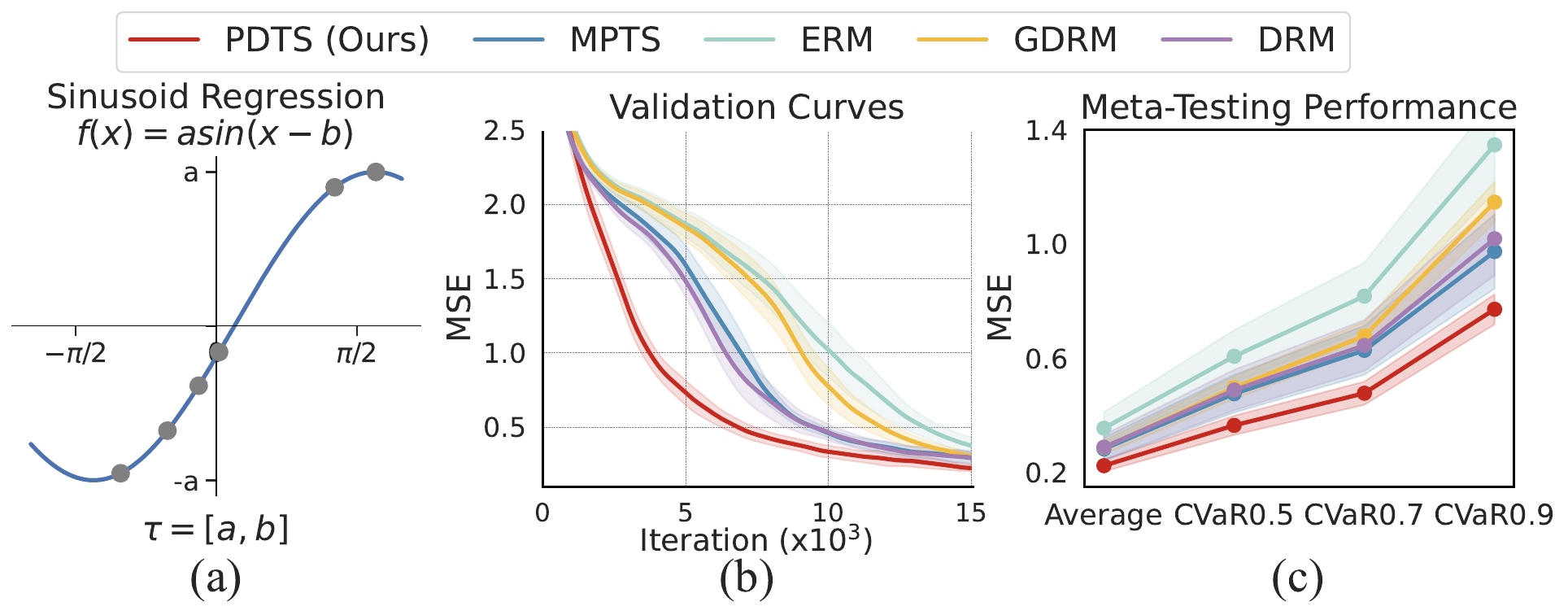}
    \vspace{-10pt}
    \caption{
\textbf{Few-Shot Sinusoid Regression Results}. 
(a) Illustration of the sinusoid regression problem, where the task identifier $\bm\tau$ consists of the amplitude and phase $[a, b]$.  
(b) Curves of averaged MSEs on the validation task set for all methods during meta-training.  
(c) MSE values for all methods at various $\alpha$ levels of $\text{CVaR}_{\alpha}$ during meta-testing. 
    }
    \label{fig:sinu}
    \vspace{-5pt}
\end{figure}
\subsection{Beyond Decision-Making: Robust Supervised Meta-Learning}

This work primarily focuses on risk-averse sequential decision-making.
However, PDTS is readily extendable to other risk-averse scenarios, such as supervised meta-learning.
As shown in Fig.~\ref{fig:sinu}, we evaluate PDTS on sinusoid regression, a commonly-used toy example introduced in~\citet{finn2017model}, which involves adapting quickly to new functions using only 10 samples.
Consistent with the results observed in decision-making, PDTS achieves superior performance compared to all baselines, demonstrating faster average performance and more robust adaptation.
Moreover, as demonstrated in MPTS~\cite{wang2025any}, the RATS paradigm has broad applicability, including image classification~\citep{gondal2024domain}.
It is believed to be promising in other interesting areas, such as LLM-guided decision-making~\cite{ma2024dreureka, wang2024llm, qu2024choices, qu2025latent}, multi-agent systems~\cite{shao2023complementary, shao2023counterfactual, qu2023hokoff}, and data sampling in offline reinforcement learning~\cite{levine2020offline, zhang2023sample, mao2023supported1, mao2023supported2, hong2023beyond, mao2024offline, mao2024doubly}.

\subsection{Ablation Studies and Additional Analysis}
In this section, we perform additional experiments to carry out ablation studies on the hyperparameters and components of PDTS, demonstrate the presence of the concentration issue in MPTS, and validate the effectiveness of PDTS in addressing it. For computational efficiency, we use sinusoid regression as the testbed.

\paragraph{Ablation Study on Diversity Regularization Weight $\gamma$.}
As shown in Fig.~\ref{fig:ablation}(a), we evaluate the effect of varying values of the diversity regularization weight $\gamma$. 
It is evident that diversity regularization plays a crucial role in PDTS, as its absence ($\gamma = 0$) leads to a dramatic performance drop. 
PDTS demonstrates robustness to the choice of $\gamma$ within a certain range (e.g., $[1,2]$). 
However, excessively large values of $\gamma$ degrade performance, emphasizing the importance of balancing diversity and robust optimization.
In most cases, setting $\gamma$ to 1 or a nearby value secures superior enough performance.
For scenarios with an extremely low-dimensional task identifier, increasing $\gamma$ appropriately may improve performance.

\paragraph{Ablation Study on Key Components: Posterior Sampling and Diversity Regularization.} In simple terms, PDTS can be viewed as the combination of MPTS and diversity regularization, with UCB replaced by posterior sampling. 
We conduct an ablation study to highlight the significance of each component. 
As shown in Fig.~\ref{fig:ablation}(b), we replace the UCB in MPTS with posterior sampling (MPTS+P) and incorporate diversity regularization into MPTS (MPTS+D). 
The results demonstrate that each component contributes significantly to the superiority of PDTS.

\begin{figure}[t]
    \centering
    \includegraphics[width=0.95\linewidth]{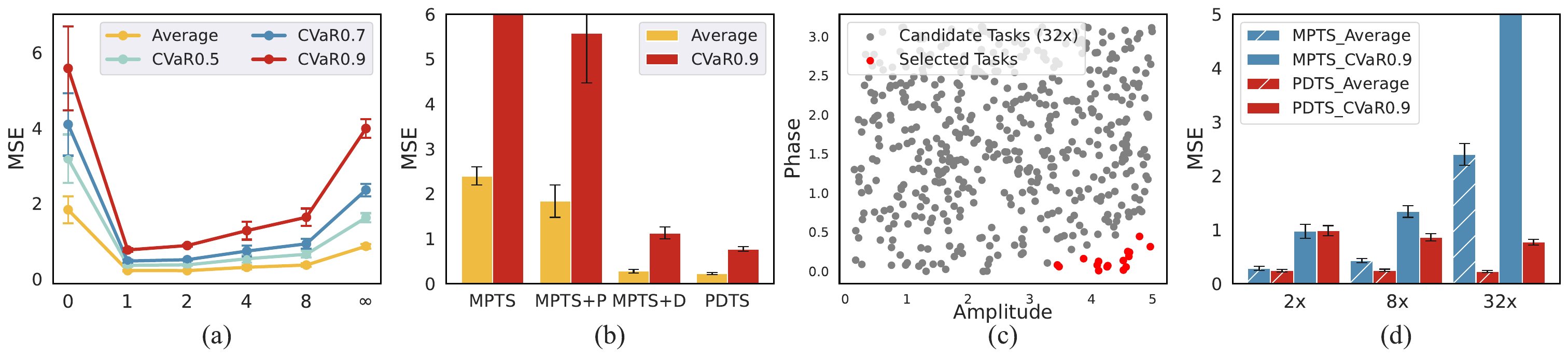}
    \vspace{-15pt}
    \caption{(a) Meta-testing results trained with different hyperparameter, $\gamma$.
(b) Ablation studies on key components, including posterior sampling (P) and diversity regularization (D).
(c) Visualization of the distribution of candidate tasks (gray points) and selected tasks (red points) in MPTS, highlighting the concentration issue.  
(d) Comparison of the average and CVaR$_{0.9}$ meta-testing performance of PDTS and MPTS with increasing pseudo-batch sizes.}
    \label{fig:ablation}
\end{figure}
\paragraph{The Presence of the Concentration Issue in MPTS.}
As introduced in Sec.~\ref{subsec:diversity}, we theoretically prove the presence of a concentration issue in MPTS.  
To empirically demonstrate the concentration issue and its impact, we evaluate MPTS on sinusoid regression.
Fig.~\ref{fig:ablation}(c) shows that, as the candidate batch size increases, MPTS tends to select tasks concentrated within a small region—a phenomenon referred to as the concentration issue in the main paper.
As illustrated in Fig.~\ref{fig:ablation}(d), this concentration issue leads to catastrophic performance degradation in MPTS as the candidate batch size increases.

\paragraph{PDTS Addresses the Concentration Issue and Benefits from Improved Coverage.}
In contrast to MPTS, Fig.~\ref{fig:ablation}(e) demonstrates that by incorporating diversity regularization, our method, PDTS, avoids the concentration issue and does not experience performance collapse as the candidate batch size increases.
More impressively, the performance of PDTS improves with increasing candidate batch size, demonstrating the benefits of encouraging broader coverage of the task space during subset selection as proposed by PDTS.

\paragraph{Ablation Study on the Risk Predictive Model.}
We conducted an ablation study to analyze the impact of the risk predictive model. We designed two variants, PDTS-Deep and PDTS-Shallow, by increasing and decreasing the number of encoder-decoder layers, respectively. Additionally, we replaced the encoder-decoder structure with an MLP to create PDTS-MLP. Results on Walker2dVel are summarized in Table~\ref{tab:ablateriskmodel}.
From these results, we observe:
(1) All PDTS variants achieve better task robust adaptation, confirming the effectiveness of PDTS and its generality across different risk predictive models.
(2) The encoder-decoder architecture generally outperforms MLP-based models, supporting the rationale behind this design.
(3) Deeper networks may introduce a performance-robustness trade-off in the current setting, which we plan to further investigate in more complex scenarios.
(4) Weaker risk prediction models degrade overall performance, due to poorer difficult MDP identification.

\begin{table}[ht]
\centering
\caption{Comparison of methods with different risk predictive models on Walker2dVel.}
\begin{tabular}{ccccc}
\toprule
Methods & $\text{CVaR}_{0.9}$ & $\text{CVaR}_{0.7}$ & $\text{CVaR}_{0.5}$ & Average \\
\midrule
ERM             & -69.77$\pm$7.62 & -31.73$\pm$7.82 & -3.78$\pm$6.66  & 38.88$\pm$4.73 \\
PDTS            & \textbf{-22.42$\pm$3.13} & \textbf{2.86$\pm$3.04}  & 16.57$\pm$2.93  & 40.40$\pm$3.07 \\
PDTS-Deep       & -30.51$\pm$7.11 & 0.55$\pm$5.69   & \textbf{17.99$\pm$4.8}  & \textbf{44.42$\pm$3.69} \\
PDTS-Shallow    & -41.24$\pm$4.74 & -11.34$\pm$4.5  & 3.92$\pm$4.33   & 33.64$\pm$4.01 \\
PDTS-MLP        & -41.07$\pm$4.88 & -12.04$\pm$5.06 & 3.38$\pm$4.94   & 32.96$\pm$4.58 \\
\bottomrule
\end{tabular}
\label{tab:ablateriskmodel}
\end{table}

\section{Other Discussions}

\textbf{Relation with Traditional Active Learning.}
Traditional active learning \citep{ren2021survey} aims at reducing the sampling redundancies during optimization and exploiting historical optimization information to improve learning efficiency, such as annotations and computations.
The active query strategies \citep{zhu2003combining,gal2017deep,kirsch2019batchbald,wu2022entropy,mukhoti2023deep} also rely on some predictive models and utilize principles like uncertainty, diversity, etc. 
RATS, such as MPTS \citep{wang2025modelpredictivetasksampling} and PDTS in this work, stresses the importance of robustness during active sampling. 
Hence, the predictive model requires scoring the task difficulties without exact evaluation.

\textbf{When MPTS Meets Diversity Regularization.}
The diagnosis of the concentration issue in Sec. \ref{subsec:diversity} identifies the diversity regularization as a plausible solution to encourage the exploration of the task space and bring more worst-case robust solutions as more arms are constructed by increasing $\mathcal{B}$.
Actually, we also examine this part and include the ablation studies on the sinusoid regression in Fig. \ref{fig:ablation}(b).
For other evaluations in the main paper, we assume that MPTS's group \citep{wang2025modelpredictivetasksampling}  adopts the optimal hyper-parameter configurations.
Hence, their setup is adopted to produce MPTS results.
Meanwhile, the posterior sampling's advantage lies in (i) no extra hyperparameter adjustment, unlike UCB used in MPTS, and (ii) stochastic optimism when the uncertainty is difficult to estimate.

\textbf{PDTS is Agnostic to the Risk Predictive Model.}
As RATS in this work is a rarely investigated concept in the field, limited methods have been developed to score task difficulties, particularly MDPs' difficulties under a policy.
The risk predictive model in MPTS has approximately achieved the purpose. Hence, we reuse their module as the backbone.
In reality, our theoretical analysis and PDTS will be compatible with other risk-predictive models in the future.

\section{Computational Platform \& Software}

This research project conducts experiments using NVIDIA 3090 GPUs in computation, and Pytorch works as the deep learning toolkit in implementation.
The software requirement list can be found in the open-source code from the project website.
Please refer to technical blog from our team website at \url{https://www.thuidm.com/}.

\end{document}